\newtheorem{theorem}{Theorem}
\newtheorem{lemma}{Lemma}
\definecolor{darkgreen}{rgb}{0.0, 0.5, 0.0}
\renewenvironment{proof}{\paragraph{Proof:}}{\hfill $\blacksquare$}
\newcommand{\method}{\text{GRPO-$\lambda$} }
\newcommand{\reb}[1]{{#1}}
\title{\method: Credit Assignment improves LLM Reasoning}
\author[1]{Prasanna Parthasarathi$^*$}
\author[2,3,4]{Mathieu Reymond$^*$}
\author[1]{Boxing Chen}
\author[1]{Yufei Cui}
\author[2,3,5,6]{Sarath Chandar}
\affil[1]{Noah's Ark Lab, Huawei Technologies Ltd.}
\affil[2]{Chandar Research Lab}
\affil[3]{Mila – Quebec AI Institute}
\affil[4]{Université de Montréal}
\affil[5]{Polytechnique Montréal}
\affil[6]{Canada CIFAR AI Chair}
\begin{abstract}
Large language models (LLMs) are increasingly deployed for tasks requiring complex reasoning, prompting significant interest in improving their reasoning abilities through post-training.
Especially RL based methods using verifiable reward, like the state-of-the-art GRPO, have shown to tremendously improve reasoning behaviors when applied as post-training methods.
However, the lack of an explicit reward or critic model limits GRPO's ability to assign fine-grained credit across token sequences.
In this work, we present \method, a novel extension to GRPO that enhances credit assignment in RL finetuning of LLMs for complex reasoning tasks.
We approximate learning from $\lambda$-return with a reformulation of eligibility traces using token-level log-probabilities applied after each sequence generation, and a novel critic-free approximation of the temporal-difference error.
We introduce a few variations for the weighting of the $\lambda$-return, and their applications to the {\it eligibility}-trace, where all the variations provide significant gains over GRPO.
We compare \method against GRPO by training models from 1.5B to 7B parameters on $4$ different math reasoning datasets. The training plots demonstrate 30-40\% improved performance during RL training on both \textsc{LLaMA-3.1} and \textsc{Qwen-2.5} architectures. 
Finally, we show that with \method, the resulting average performance on AIME24, Math500, OlympiadMath, MinervaMath, and AMC improves over GRPO by over $3$ points and a $4.5$ points improvement on the 7B model.
\end{abstract}
\begin{document}

\maketitle


\section{Introduction}

There is now a widespread acceptance of large language models (LLMs), wherein they are consulted on problems ranging from mundane tasks to ones requiring involved reasoning. For the latter, classical pre-training has been deemed insufficient due to the lack of explicit reasoning elicitations in the training data~\citep{rajani2019explain}. Thus, the focus to improving the reasoning skills of LLMs has been to expose them to problems requiring logic, such as mathematics and coding tasks, instead of aiming to produce plausible and coherent text~\citep{hui2024qwen2, xu2024wizardlm, yang2024qwen2, shao2024deepseekmath}. The recipe for scaling the performance on these reasoning tasks rests on elaborate post-training methods, including techniques like supervised-finetuning (SFT, \citealt{luo2023wizardmath}), reinforcement learning (RL, \citealt{schulman2017proximal}) without or with human feedback (RLHF, \citealt{ouyang2022training}), hybrids such as direct preference optimization (DPO, \citealt{rafailov2023direct}), or any of their combinations.

Among these post-training techniques, RL shows promise as it transforms the next-token prediction problem to a reward maximization problem, allowing the LLM to freely generate new tokens as long as the resulting sequence produces satisfactory rewards. This is particularly relevant for reasoning problems such as mathematics and coding tasks, as the LLM needs to learn strategies that produce a verifiable, ground-truth outcome (e.g., the solution of the math problem). Recently, Deepseek-R1 \citep{guo2025deepseek} proposed an RL-based post-training method that resulted in the famously known "Aha! moment", where the model learned to perform self-reflection strategies. At its core lies group relative policy optimization (GRPO, \citealt{shao2024deepseekmath}), which updates the LLM parameters using Monte-Carlo estimates of the policy returns to reinforce positive reasoning.

Contrary to the widely used PPO algorithm, GRPO does not require a critic to estimate the expected return of the policy. Instead, the expected return is approximated by taking the average over multiple rollouts of the policy. This makes GRPO lightweight, as there is no additional memory footprint for the critic. However, what it does not do, contrary to PPO, is to use \emph{eligibility traces} to update not only the current token based on the next one, but earlier tokens as well.

\begin{wrapfigure}{r}{0.5\textwidth}
  \centering
  \includegraphics[width=0.5\textwidth]{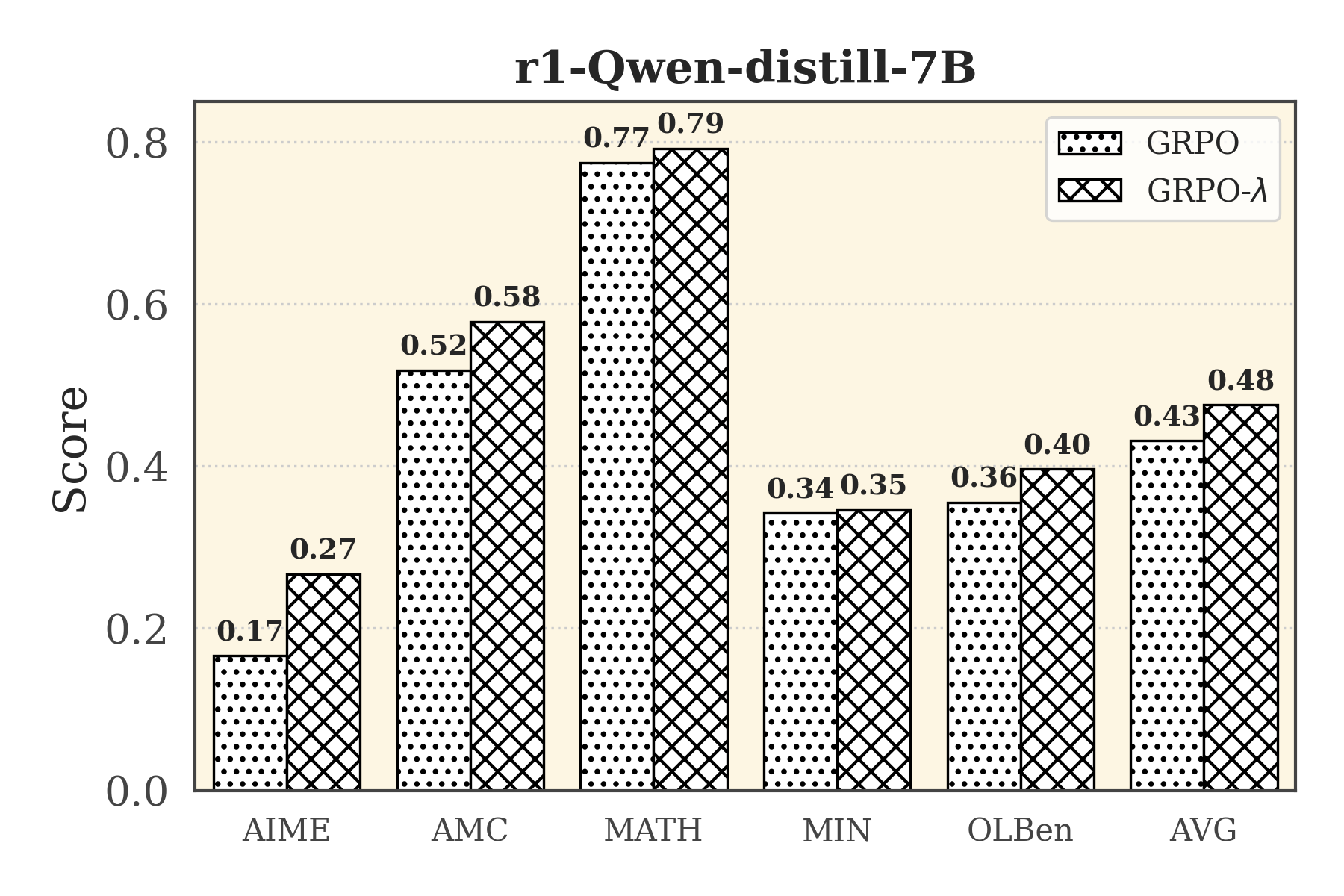}
  \caption{GRPO-$\lambda$ improves R1-Distill-Qwen-7B's evaluation performance by over 4 points on average across math benchmarks including a 10 point improvement on AIME24.}
  \label{fig:7B results}
\end{wrapfigure}
In RL, eligibility traces are a way to combine Monte-Carlo (MC) estimates and Temporal-Difference (TD) updates. It allows for rapid backpropagation of values to earlier states, and improves learning stability, as it balances between the high bias resulting from TD updates and the high variance resulting from MC estimates. This balance is governed by a parameter $\lambda \in [0,1]$, where $\lambda = 0$ results in a pure TD update, and $\lambda = 1$ only uses the MC estimates. Importantly, this interpolation between one-step TD and MC methods is used to update PPO.

In this work, we reformulate these traces such that they are directly applicable to the policy. This allows us to combine eligibility traces with the value estimates from the rollouts produced by GRPO. We thus keep the advantages of GRPO, namely the lightweight memory footprint, while drastically improving the credit assignment through rapid value propagation towards earlier tokens. Moreover, in the setting of LLM post-training, reformulating eligibility traces for the policy can be seen as a form of token-specific weighting of the policy gradient loss. This insight leads us to propose different token-specific weighting mechanisms for credit assignment. Finally, thinking about eligibility traces made us focus on GRPO's value estimates at a token level. Since all GRPO's rollouts are performed from the same start-state (i.e., the prompt with the question to solve), its value estimates become increasingly inaccurate for later tokens in the sequence. We bound this error, which may be of independent interest to the reader. 

To summarize, the contributions are:
\begin{enumerate}
    \item With \autoref{lemma:DeltaV_bound} and \autoref{theorem-grppo} we propose, \method, that extends GRPO through  credit assignment with a novel reparameterization for PPO's eligibility trace, in a critic-free TD learning for language reasoning.
    \item Finetuning different sized models and architectures on 4 different mathematical reasoning tasks show that \method learns faster and improves 30-40\% better than GRPO on mathematical benchmarks tasks (\autoref{fig:training_efficiency}).
    \item Benchmark performance of \method on $5$ benchmarks shows an average increase of $3$ points over GRPO (\autoref{tab:benchmark-perf}). And, for Deepseek-R1-Distill-Qwen-7B \method improves over $4$ points (\autoref{fig:7B results}).
    \item Using insights from the proposed bounds to explore alternate trace weight styles, showing that for RL post-training of LLMs there are viable alternatives to the classic traces (\autoref{fig:hparam-comparisons}, \autoref{appsec:hparam-comparisons-1.5B}).
\end{enumerate}


\section{Background}

\paragraph{Related work} It has been an active field of research to distill deliberate reasoning abilities into LLMs, as they are often prone to quick judgments~\citep{li2025system}. Early approaches attempted to explicitly instill reasoning into language models via explanations in training data, an expensive avenue as it requires large amounts of human-annotated data~\citep{rajani2019explain,nye2022show}. Chain-of-Thought (CoT) prompting provides a training-free alternative by simply prompting the model to think step-by-step~\citep{wei2022chain,kojima2022large}, with potentially self-verification steps~\citep{li2023making,wei2022chain} or diversification of reasoning paths~\citep{wang2023selfconsistency,fu2023complexitybased}. A logical next step has been to use self-generated CoT as a training signal for LLMs to iteratively improve their reasoning abilities~\citep{zelikman2022star}. This is often done using RL~\citep{trung2024reft}. While the reward is usually provided at the end of the sequence~\citep{singh2024beyond}, as is the case for our setting, other works have tried to improve the credit assignment of intermediate steps using tree-search, at the expense of additional computations~\citep{feng2023alphazero, zhang2024rest}. Finally, the provided reward is of crucial importance for the learned reasoning to be generalizable~\citep{yeo2025demystifying}. We refer to a broader overview of related work in \autoref{appsec:extended-related-work}.

\paragraph{Reinforcement Learning} RL aims to solve a sequential decision problem, which can be modeled as a Markov Decision Problem (MDP) \citep{puterman1994markov} $(\mathcal{S}, \mathcal{A}, \mathcal{P}, \mathcal{R}, \gamma)$. $\mathcal{S}$ is the set of all possible states the environment can be in. $\mathcal{A}$ is the set of all possible actions that are available to the agent. $\mathcal{P}: \mathcal{S}, \mathcal{A} \to \mathcal{S}$ encompasses the environment's (stochastic) transition dynamics, $\mathcal{R}: \mathcal{S}, \mathcal{A} \to\mathbb{R}$ is the reward function and $\gamma$ is the discount factor. The agent can interact with the environment through a policy $\pi: \mathcal{S}, \mathcal{A} \to[0,1]$ which maps a state to a probability distribution over the action-space conditioned on the state. At each timestep $t$, the agent receives the current state as input $s_t\in\mathcal{S}$ and takes the action $a_t\sim \pi(\cdot|s_t)$. The environment state is updated following  the transition function $s_{t+1}\sim\mathcal{P}(\cdot|s_t, a_t)$ and gives a feedback to the agent in the form of a reward $r_t = \mathcal{R}(s_t, a_t)$. 

We define the episodic return $G_t$ as the summation of the discounted rewards obtained by an agent along a trajectory following a policy $\pi$ and starting from timestep $t$. $G_t = \sum_{k=t}^T\gamma^{k-t}r_k$, where $T$ denotes the timestep at which the episode terminates. We further define the value function $V_\pi(s) = \mathbb{E}_\pi[G_t|s_t=s]$ which evaluate the expected episodic return of an agent following policy $\pi$ and starting at a specific state $s_t$. The goal of RL is to find the optimal policy $\pi^* = \text{argmax}_\pi\{V_\pi(s_0)\}$ where $s_0$ follows the initial state distribution of the environment.

In the context of LLM post-training, the MDP definition is peculiar: $\mathcal{A}$ represents all possible tokens that can be generated by the LLM, and the state $s_t$ consists of a sequence of generated tokens, $s_t = (s_{t-1}, a_{t-1})$. For mathematical problems, the start-state $s_0$ consists of a mathematical question (also called prompt), tokenized to $m$ tokens, i.e., $s_0 = (a^0, \dots, a^{m-1})$. The policy, in this case the pretrained LLM, selects the next token $a_t$ based on all previous tokens $s_t$. A special end-of-sequence (EOS) action $a^\text{EOS}$ indicates the end of an episode. At that point, the generated answer is verified for correctness, resulting in $r_T = 1$ for correct answers, and $r_T = 0$ otherwise. All intermediate rewards are 0. This means that $G_0 = \gamma^Tr_T \in [0,1]$.

\paragraph{PPO} The fact that a pretrained LLM can be used as a good initial policy makes actor-critic methods, that explicitly represent a policy, such as PPO~\citep{schulman2017proximal}, a particularly good fit for this setting. PPO is composed of an actor, the policy $\pi_\theta$ parametrized by $\theta$, and of a critic $V_\psi$ parametrized by $\psi$, which is used to estimate the expected return. 

The use of $V_\psi$ provides a major benefit. With it, there is no need to wait until the episode ends to estimate $G_t$. Instead, one can bootstrap $G_t$ using $V_\psi$, e.g., $\hat{G}_t = r_t + \gamma r_{t+1} + \dots + \gamma^{n-1}r_{t+n-1} + \gamma^nV_\psi(s_{t+n})$. With $n=T$, this falls back to the episodic return $G_t$, resulting in potentially high variance in returns between episodes due to the stochasticity of $\pi_\theta$. With $n=1$, we mitigate the variance issue, but this introduces bias if $V_\psi$ is inaccurate. The difference between the 1-step $\hat{G}_t$ and the predicted value is also called the temporal-difference (TD) error $\delta_t = r_t + \gamma V_\psi(s_{t+1}) - V_\psi(s_t)$. A way to nicely balance this variance-bias trade-off is through generalized advantage estimation (GAE), which computes a weighted sum over TD errors, $A_\text{GAE}(s_t) = \delta_t + \gamma\lambda\delta_{t+1}$, with $\lambda \in [0,1]$ the weighting coefficient, and can be also seen as a weighted trace over future TD errors.

PPO combines GAE with a clipped surrogate objective function to update its policy, $\ell_\text{GAE} = \min(\pi_\text{ratio}(s_t)A_\text{GAE}(s_t),\allowbreak \text{clip}(\pi_\text{ratio}(s_t), 1-\epsilon, 1+\epsilon)A_\text{GAE}(s_t) )$, where $\pi_\text{ratio}(s_t) = \frac{\pi_\theta(a_t|s_t)}{\pi_{\theta_\text{old}}(a_t|s_t)}$ is the ratio between the current policy $\pi_\theta$ and the policy at the start of the epoch $\pi_{\theta_\text{old}}$, and clipping $\pi_\text{ratio}$ between $1-\epsilon$ and $1+\epsilon$ discourages $\pi_\theta$ from changing too much from $\pi_{\theta_\text{old}}$ which, combined with GAE stabilizes learning. To update its critic $V_\psi$, PPO minimizes a mean-squared error (MSE) loss on the return, which in this case is bootstrapped using GAE, i.e., $\ell_\psi = \text{MSE}\left(V_\psi(s_t), \text{sg}(V_\psi(s_t)+A_\text{GAE}(s_t))\right)$, where $\text{sg}(.)$ is the stop-gradient operator. 
Additionally, specifically for LLM post-training, to avoid reward hacking~\citep{trung2024reft,yeo2025demystifying}, PPO is combined with a KL-divergence regularizer on the initial, pretrained policy (also called the \emph{referent policy}) $\pi_\text{ref} \coloneq \pi_{\theta_0}$, i.e., $\ell_\text{KL} = \text{D}_\text{KL}(\pi_\theta || \pi_\text{ref})$. 
\reb{Combined, this results in the following PPO objective: $\ell_\text{PPO} = \ell_\psi + \ell_\text{GAE} - \beta\ell_\text{KL}$, where $\beta$ is a small constant factor to weight the regularizer term.}

\paragraph{GRPO} All the benefits of PPO's critic $V_\psi$ rely on the fact that $V_\psi$ is decently accurate. In practice, for LLM post-training, this is a non-trivial task. First, the reward is sparse, only providing a binary signal at the end of each sequence generation. This complicates the task of $V_\psi$, which should be accurate at every intermediate token. Second, $\pi_\theta$, having been pretrained before starting RL post-training, is already far better than a random policy. \reb{This contrasts with $V_\psi$, who is often initialized to from a reward model~\citep{huang2024the}, instead of predicting the policy's expected return}. This disparity between $\pi_\theta$ and $V_\psi$ means $V_\psi$ has to "catch up" to $\pi_\theta$, which can hamper post-training. Next to the challenges of training $V_\psi$, it is also memory intensive, as $V_\psi$ has to be kept in memory with $\pi_\theta$. GRPO~\citep{shao2024deepseekmath}, a recent extension of PPO, aims to tackle these challenges by removing $V_\psi$ altogether. Instead, for a given prompt (i.e., a given start-state $s_0$), GRPO generates \emph{multiple responses}, called a group $\mathcal{G} = \left\{ s^0_{0:T}, \dots, s^{g-1}_{0:T} \right\}$, where $g=|\mathcal{G}|$ is a hyper-parameter denoting the size of the group. The group's average return is then used to approximate $V(s_0)$. Note that, since there is no critic, GRPO does not use GAE. Instead, the advantage is computed using a normalized advantage estimation (NAE), i.e., $A_\text{NAE}(s^i_t) = \frac{G^i_t - \mu^i_t}{\sigma^i_t}$ with $i \in [g]$, where $\mu^i_t, \sigma^i_t$ are the mean, standard deviation of all states $\left\{ s^0_{t}, \dots, s^{g-1}_{t} \right\}$ in group $\mathcal{G}$. $A_\text{NAE}$ then replaces $A_\text{GAE}$ in PPO's surrogate objective function, i.e., $\ell_\text{NAE} = \min\left(\pi_\text{ratio}(s^i_t)A_\text{NAE}(s^i_t), \text{clip}(\pi_\text{ratio}(s^i_t), 1-\epsilon, 1+\epsilon)A_\text{NAE}(s^i_t) \right)$. This results in the following GRPO objective: 
$\ell_\text{GRPO} = \ell_\text{NAE} - \beta\ell_\text{KL}$.


\section{\texorpdfstring{\method}{\texttt{GRPO}-L} for rapid reward propagation}
\label{sec:method}

GRPO provides an efficient alternative to the PPO critic, avoiding its additional memory requirements and approximating the expected return with multiple Monte-Carlo rollouts. The use of $A_\text{NAE}$, however, comes with two downsides. First, since all the sequence generations from the same group were performed from the same state $s_0$, the baseline $\mu^i_t$ only estimates the expected return when $t=0$, and is a biased estimate for all $t>0$. Estimating the expected return at every $t$ would require to perform multiple sequence generations for each $s_t$, an approach taken by VinePPO~\citep{kazemnejad2024vineppo} at the cost of a significantly higher compute overhead. Second, $A_\text{NAE}$ subtracts the baseline $\mu^i_t$ from the return $G^i_t$, which is used in policy gradient methods to reduce the variance of the policy updates. But this does not provide a parametrized way of balancing variance and bias like GAE does. But, precisely because GRPO uses biased estimates of $V(s_t),  \forall\, t>0$, it should aim to use generalized advantage estimates. This is the central motivation behind our proposed algorithm, \method, which incorporates a critic-free reformulation of GAE.

\begin{theorem}
\label{theorem-grppo}
The policy gradient estimate $\hat{g}$ using traces from generalized advantage estimation $A_\text{GAE}$ can be re-parameterized with a critic-free TD-error $\delta_t$ such that $\hat{g} = \sum_{t=0}^\infty A_\text{GAE}(s_t)\nabla_\theta\log\pi_\theta(a_t|s_t) = \sum_{t=0}^\infty\delta_t\sum_{l=0}^t(\gamma\lambda)^l\nabla_\theta\log\pi_\theta(a_{t-l}|s_{t-l})$. Proof in Appendix~\ref{full-proof-theorem}.
\end{theorem}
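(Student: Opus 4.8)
The plan is to prove this as the policy-gradient analogue of the classical forward-view/backward-view equivalence of $\text{TD}(\lambda)$. The starting point is to write $A_\text{GAE}$ in its full (non-truncated) form as a geometrically weighted sum of future TD errors, $A_\text{GAE}(s_t) = \sum_{l=0}^\infty (\gamma\lambda)^l \delta_{t+l}$, rather than the two-term expression used informally in the Background. Substituting this into the left-hand (forward-view) estimator gives a double sum $\hat{g} = \sum_{t=0}^\infty \sum_{l=0}^\infty (\gamma\lambda)^l \delta_{t+l}\,\nabla_\theta\log\pi_\theta(a_t|s_t)$, and the entire content of the identity is then to rearrange this double sum so that each TD error $\delta_t$ is paired with a decaying trace over the log-probability gradients of all \emph{preceding} tokens.

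The steps, in order, would be: (i) expand $A_\text{GAE}$ as above and substitute into $\hat{g}$; (ii) change the summation variable from $l$ to $k = t + l$, so that the inner index ranges over $k \ge t$ and the weight becomes $(\gamma\lambda)^{k-t}$; (iii) swap the order of summation, collecting all terms that share a common $\delta_k$, so that for fixed $k$ the outer index $t$ ranges over $0 \le t \le k$; and (iv) re-index the resulting inner sum by $l = k - t$ to obtain $\hat{g} = \sum_{k=0}^\infty \delta_k \sum_{l=0}^k (\gamma\lambda)^l \nabla_\theta\log\pi_\theta(a_{k-l}|s_{k-l})$, which after renaming $k \to t$ is exactly the claimed right-hand side. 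Because the LLM episodes terminate at a finite horizon $T$, all of these are in fact finite sums and the reindexing is purely combinatorial bookkeeping.

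The main obstacle is less the algebra than the two justifications that bracket it. First, to state the identity with infinite sums I must justify the interchange in step (iii) via absolute convergence: for $\gamma\lambda < 1$ with the TD errors uniformly bounded (which holds here since rewards lie in $[0,1]$ and $G_0 \in [0,1]$), the double series converges absolutely and Fubini applies, while in the finite-horizon LLM setting the sums are finite and the interchange is automatic. Second, and more delicate, is the word \emph{critic-free}: the rearrangement above is agnostic to how $\delta_t$ is defined, so the real work is to replace the critic-based $\delta_t = r_t + \gamma V_\psi(s_{t+1}) - V_\psi(s_t)$ with one built from GRPO's Monte-Carlo group estimates of $V(s_t)$. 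I would define this critic-free $\delta_t$ explicitly from the group means $\mu_t^i$ and then invoke \autoref{lemma:DeltaV_bound} to control the bias introduced by using estimates that are only unbiased at $t=0$, thereby certifying that the reparameterized gradient remains a sound surrogate. This coupling to the value-estimate bound is what distinguishes the result from the textbook forward/backward equivalence.
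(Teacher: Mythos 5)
Your proposal is correct and follows essentially the same route as the paper's proof: expand $A_\text{GAE}(s_t)=\sum_{l=0}^\infty(\gamma\lambda)^l\delta^V_{t+l}$, substitute into the forward-view estimator, and rearrange the double sum so each $\delta^V_t$ collects a $(\gamma\lambda)$-decayed trace over the gradients of preceding tokens (the paper does this by writing out and regrouping terms rather than by an explicit Fubini argument, but the combinatorics are identical). The only divergence is your final step: the paper does not attempt to ``certify'' the critic-free substitution via \autoref{lemma:DeltaV_bound} --- it simply defines $\delta^V_t \coloneqq A_\text{NAE}(s_t)$ as an approximation and discusses the resulting bias informally afterwards (even reporting that adding the bound to the advantage was empirically detrimental), so that part of your plan is extra work beyond what the stated identity requires.
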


Intuitively, \autoref{theorem-grppo} provides an elegant reparameterization of the GAE formulation as weighted cumulative \emph{action} log-probabilities instead of a sum of TD residuals to enable gradient estimation for the language generation setting. The resulting objective function, $\ell_\pi = \min\left(\pi^\text{GAE}_\text{ratio}(s_t)\delta_t, \text{clip}(\pi^\text{GAE}_\text{ratio}(s_t), 1-\epsilon, 1+\epsilon)\delta_t \right)$, now incorporates GAE's $\lambda$ weighting mechanism in $\pi^\text{GAE}_\text{ratio}(s_t)$:

\begin{equation}
\pi^\text{GAE}_\text{ratio}(s_t) = \exp{\left(\sum_{l=0}^t(\gamma\lambda)^l\log\pi_\theta(a_{t-l}|s_{t-l}) - \sum_{l=0}^t(\gamma\lambda)^l\log\pi_{\theta_\text{old}}(a_{t-l}|s_{t-l})\right)}.
\end{equation}

Additionally, since we do not have a critic $V_\psi$, we approximate $\delta_t$ using the group returns as in GRPO, i.e., $\delta_t = A_\text{NAE}$. Combined with the GAE weighting, this results in \method, which significantly improves the reasoning performance of the resulting post-trained LLM compared to GRPO. \method is also a generalization of GRPO, as it falls back to GRPO with $\lambda = 0$. \reb{Finally, although the computational overhead increases linearly with the sequence length, it is negligible compared to the overall LLM post-training process. In our experiments, we did not notice any significant walltime difference between GRPO and \method.}

\subsection{Bounding the normalized advantage estimation bias}

To better understand the bias GRPO introduces by using $V(s_0)$ estimates for states $s_t, \forall\, t>0$, we analyze the difference between the value in $s_0$ and in $s_t$:

\begin{lemma}
 \label{lemma:DeltaV_bound}
Considering an LLM post-training setting, i.e., a deterministic transition function where $s_t$ is defined by $a_{0:t}$, and a binary reward signal, $\Delta V(s_t) \equiv V(s_0) - V(s_t) \leq 1-\prod_{k=0}^{t-1}\sum_{a_k\neq a^\text{EOS}} \pi_\theta(a_k | s_k)$, where $a^\text{EOS}$ corresponds to the action generating the end-of-sequence token, and thus terminating the episode. Proof in Appendix~\ref{full-proof-lemma}.
\end{lemma}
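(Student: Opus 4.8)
The plan is to exploit the two structural assumptions of the setting --- deterministic transitions $s_{k+1}=(s_k,a_k)$ and a reward that is nonzero only at the terminating EOS action --- in order to unfold $V$ along the realized prefix chain $s_0,s_1,\dots,s_t$ and obtain a multiplicative, telescoping recursion. First I would write the one-step Bellman expansion at each prefix state, isolating the terminating action from the continuing ones:
\[ V(s_k) = \pi_\theta(a^{\text{EOS}}\mid s_k)\, r^{\text{EOS}}_k + \gamma \sum_{a_k \neq a^{\text{EOS}}} \pi_\theta(a_k \mid s_k)\, V\bigl((s_k,a_k)\bigr), \]
where $r^{\text{EOS}}_k\in\{0,1\}$ is the reward collected by emitting EOS at $s_k$. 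The binary-reward assumption forces $r^{\text{EOS}}_k\le 1$ and, together with $G_t\in[0,1]$, gives $V\in[0,1]$; this is what makes the target bound free of $\gamma$, since $\gamma\le 1$ can only shrink the right-hand side and hence only help.

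The key step I would aim to establish is a one-step contraction of the \emph{failure value} $1-V(\cdot)$ of the form
\[ 1 - V(s_k) \ \geq\ \Bigl(\textstyle\sum_{a_k \neq a^{\text{EOS}}} \pi_\theta(a_k \mid s_k)\Bigr)\bigl(1 - V(s_{k+1})\bigr), \]
obtained from the Bellman expansion by upper-bounding $r^{\text{EOS}}_k$ and every off-path continuation value $V((s_k,a))$ by $1$, retaining only the on-path successor $s_{k+1}$, and collecting the leftover probability mass into the continuation (non-termination) probability. Iterating this inequality for $k=0,\dots,t-1$ --- equivalently, a short induction on $t$ --- then makes the per-step probabilities multiply:
\[ 1 - V(s_0) \ \geq\ \prod_{k=0}^{t-1}\Bigl(\textstyle\sum_{a_k \neq a^{\text{EOS}}} \pi_\theta(a_k \mid s_k)\Bigr)\,\bigl(1 - V(s_t)\bigr). \]

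To conclude, I would use $V(s_t)\ge 0$ (so $1-V(s_t)\le 1$) and rewrite $\Delta V(s_t) = V(s_0)-V(s_t) = (1-V(s_t))-(1-V(s_0))$, which after substituting the telescoped inequality yields $\Delta V(s_t)\le \bigl(1-\prod_{k}(\cdots)\bigr)(1-V(s_t))\le 1-\prod_{k=0}^{t-1}\sum_{a_k\neq a^{\text{EOS}}}\pi_\theta(a_k\mid s_k)$, as claimed. The main obstacle I anticipate is the control of the off-path branches inside the one-step bound: discarding each sibling continuation by replacing it with $1$ is what lets the residual mass collapse onto the total continuation probability, but the realized transition only carries the single-token weight $\pi_\theta(a_k\mid s_k)$, so I would need to argue carefully that folding the off-path mass in preserves the direction of the inequality at \emph{every} depth and that telescoping a chain of inequalities (rather than equalities) keeps the final bound valid. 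Verifying $r^{\text{EOS}}_k,V\in[0,1]$ under the deterministic-transition and binary-reward assumptions is routine but must be stated explicitly, since it is precisely what licenses replacing each off-path term by $1$.
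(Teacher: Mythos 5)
Your route is genuinely different from the paper's: the paper never uses a per-step Bellman contraction along the realized path. Instead it splits $G_0$ at time $t$, identifies the tail with $\gamma^t V(s_t)$, bounds $\Delta V(s_t)$ by $\mathbb{E}\left[\sum_{k=0}^{t-1}\gamma^k r_k\right]$, and then bounds that expectation --- under the worst case that every early termination yields reward $1$ --- by the probability of emitting an EOS token before step $t$. Your proposal replaces this with a backward induction on the failure value $1-V(\cdot)$, and the gap sits exactly at the step you flag as the anticipated obstacle; it is not surmountable as stated. Carrying out the manipulation you describe (upper-bound $r_k^{\text{EOS}}$ and every \emph{off-path} continuation value by $1$) removes the off-path terms from the lower bound on $1-V(s_k)$ entirely and leaves only the realized branch, i.e.\ it yields
\[
1 - V(s_k) \;\geq\; \pi_\theta(a_k\mid s_k)\bigl(1-V(s_{k+1})\bigr),
\]
with the single-token probability $\pi_\theta(a_k\mid s_k)$, not the total continuation mass $\sum_{a\neq a^{\text{EOS}}}\pi_\theta(a\mid s_k)$ that your claimed one-step contraction requires.

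The stronger contraction is in fact false in general. Take $\gamma=1$, a state $s_k$ with no EOS mass and two continuations of probability $\tfrac{1}{2}$ each, leading to successors of value $1$ and $0$ respectively (both realizable with binary terminal rewards). If the realized token leads to the value-$0$ successor, then $1-V(s_k)=\tfrac{1}{2}$ while $\bigl(\sum_{a\neq a^{\text{EOS}}}\pi_\theta(a\mid s_k)\bigr)\bigl(1-V(s_{k+1})\bigr)=1$, so the inequality fails. Telescoping the version you can actually prove gives only $\Delta V(s_t)\leq 1-\prod_{k=0}^{t-1}\pi_\theta(a_k\mid s_k)$, which is strictly weaker than the lemma's bound, since the product of single-token probabilities is much smaller than the product of continuation probabilities. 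The surrounding scaffolding ($V\in[0,1]$, $1-V(s_t)\leq 1$, telescoping a chain of inequalities) is fine; what is missing is any mechanism that credits the off-path continuation mass to the realized successor. The paper sidesteps this by working with an expectation over \emph{all} branches (the expected reward collected before step $t$), where the full non-EOS mass appears naturally as a survival probability rather than having to be extracted branch by branch along one rollout.
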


Intuitively, the probability of generating an EOS token $a^\text{EOS}$ increases with time, thus increasing the probability of receiving a positive reward. And so, for a large enough $t$, $\pi_\theta$ might have generated many sequences shorter than $t$. It is those sequences that introduce a bias in GRPO's value estimates. Thus, earlier states have a more accurate estimation of their value.

\subsection{Alternative weighting mechanisms}
\label{sec:alternative-weighting}

The insights provided by \autoref{lemma:DeltaV_bound} lead us to think more generally about per-token weighting of the policy gradient. Assuming $\gamma = 1$, in our setting, returns and values are the same for each timestep $t$. On one hand, the return for later states have less variance, which allows us to be confident about their gradient updates. On the other hand, early states had more accurate critic estimates, since they are closer to $s_0$. Using our GAE reparameterization as a starting point, we propose reweighting alternatives, that put a different emphasis on a token depending on its position in time.

\paragraph{Traces as per-token weighting} The discount induced by $\lambda < 1$ results in an exponential decay of weighting importance as we go back in time. Instead of applying it on action log-probabilities, we propose to directly weight $\ell_\text{NAE}(s_t)$ with the $\sum_{l=0}^t(\gamma\lambda)^l$ trace. This simplifies the problem, as the trace only needs to be computed once, instead of having to sum all the log-probabilities at each policy-update. We refer to this variant as \method ($\epsilon$-weight). A side-by-side comparison can be found in \autoref{appsec:complete-theory} (\autoref{algo: GRPO-L e-trace} and \autoref{algo: GRPO-L e-weight}).

\begin{wrapfigure}{r}{0.3\textwidth}
\vspace{-14pt}
  \centering
  \includegraphics[width=0.3\textwidth]{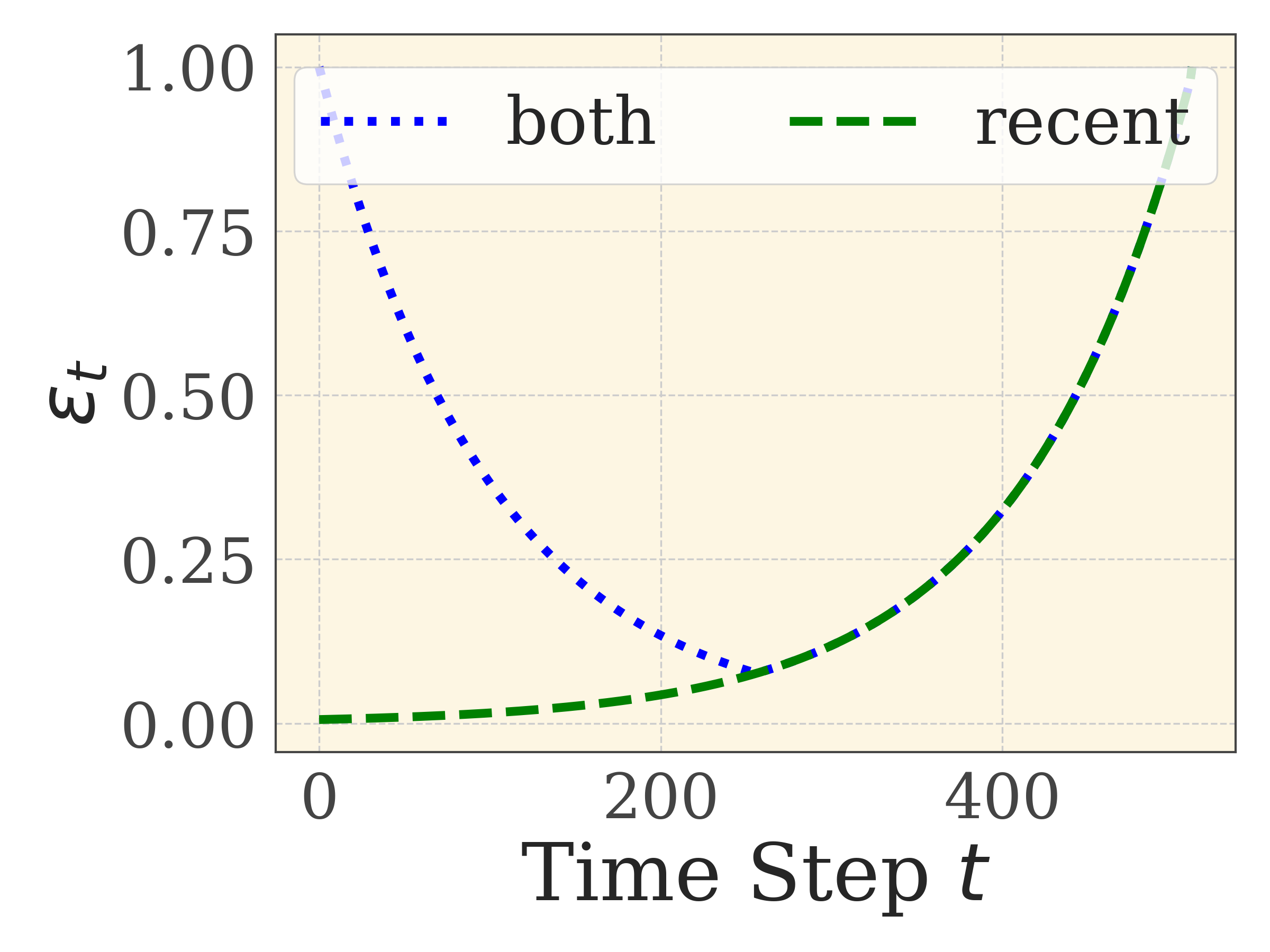}
  \caption{ $\epsilon$-trace styles.}
  \label{fig:trace styles}
\end{wrapfigure}

\paragraph{Varying the type of decay} In the RL literature, multiple variations of eligibility traces have been investigated~\citep{williams1992simple,singh1996reinforcement,seijen2014true,sutton2016emphatic,van2021expected} that dictate how they accumulate over time, and thus how much weight they provide at each timestep. Similarly, since our analysis from \autoref{lemma:DeltaV_bound} indicate two sources of inaccuracies, one on the early tokens, one on the late tokens, we propose a variation of the weighting scheme such that early tokens are considered as important as the late ones:

\begin{equation}
\pi^\text{GAE}_\text{ratio}(s_t) = \exp{\left(\sum_{l=0}^t\text{tr}(t,l)\log\pi_\theta(a_{t-l}|s_{t-l}) - \sum_{l=0}^t\text{tr}(t,l)\log\pi_{\theta_\text{old}}(a_{t-l}|s_{t-l})\right)},
\end{equation}

where $\text{tr}(t,l)=\max((\gamma\lambda)^l,(\gamma\lambda)^{t-l})$. The distinction between the classic traces, which we call \emph{recent}, and the proposed variation, which we call \emph{both}, is depicted in \autoref{fig:trace styles}. We perform extensive experiments and comparisons on all combinations of the different variations, and show that all provide significant improvement over GRPO (see \autoref{appsec:hparam-comparisons-1.5B}), proving that per-token weighting can greatly boost performance for RL finetuning.


\section{Experiments}
\label{sec:experiments}

\paragraph{Training details} We do an extensive comparison of our proposed \method against GRPO with LLMs  of diverse sizes (1.5B, 3B and 7B) on mathematic reasoning, \reb{similar to related works~\citep{kazemnejad2024vineppo,roux2025tapered,yu2025dapo,zhang2025grpo}. We focus on multiple aspects}. First, analyze the training efficiency by measuring the increase in average reward on the training dataset, while maintaining a low KL-divergence between $\pi_{\theta_t}$ and $\pi_\text{ref}$. Next, we measure the performance of the final checkpoints of our trained models on multiple challenging mathematic reasoning benchmarks. Finally, to better understand the properties of \method and the impact of \autoref{lemma:DeltaV_bound}, we perform evaluations on the alternative token weighting mechanisms: \emph{recent} and \emph{both} traces, and \emph{trace} or \emph{weight} token updates. We also assess the choice of $\lambda$, by performing the experiments on our 1.5B models with both $\lambda=0.99$ and $\lambda = 0.98$. We refer to \autoref{appsec:complete-hparam-config} for a full list of hyperparameters. 

Specifically, we use \texttt{Qwen}/\texttt{Qwen2.5-Math-1.5B-Instruct}, \texttt{Deepseekai}/\texttt{Deepseek-R1}-\texttt{Distill-Qwen-1.5B}, \texttt{suayptalha}/\texttt{Deepseek-R1-Distill-LLaMA-3B}, and \texttt{Deepseekai}/\texttt{Deepseek-R1}-\texttt{Distill-Qwen-7B}. Our RL finetuning pipeline includes an SFT step to train LLMs to reason within a specific format. For the RL finetuning datasets, we use GSM8K \cite{cobbe2021gsm8k}, Math-12K \footnote{\cite{math12K}}\cite{lightman2023lets}, MathRL-16K \footnote{\cite{mathRL16K}}, and ORZ\_MATH-57K \cite{hu2025openreasonerzeroopensourceapproach} which include a variety of challenging math problems. To benchmark, we follow \cite{liu2025understanding} and evaluate on AIME24 \cite{li2024numinamath}, AMC \cite{li2024numinamath}, OlympiadBench \cite{he2024olympiadbench}, Math500 \cite{hendrycks2021measuring}, and MinervaMath \cite{lewkowycz2022solving} benchmarks to report the individual and aggregated performance of the different post-trained LLM checkpoints. For all but the 7B mode, we train across the RL finetuning datasets for $10000$ steps. Due to computational limitations, we limit the training of the 7B model to $3500$ steps.

\subsection{Analyzing the different token weighting schemes}
\label{sec:different-token-weighting-analysis}

In Section~\ref{sec:alternative-weighting}, we propose alternative token weighting schemes to our re-parameterized general advantage estimation, namely \emph{both, recent} trace weighting styles and $\epsilon$-weight, $\epsilon$-trace token weighting styles. We analyze their effect on the two 1.5B parameter models. Moreover, to better understand the impact of the traces themselves, we incorporate 2 different values for $\lambda$ ($\lambda \in \{0.98, 0.99\}$) in these experiments. Specifically, for each model training on RL finetuning dataset across the different hyper-parameters, we average the performance over the last $100$ training steps to understand the effect of different hyper-parameter choices.

\begin{figure}[htbp]
  \centering
  \includegraphics[width=\textwidth]{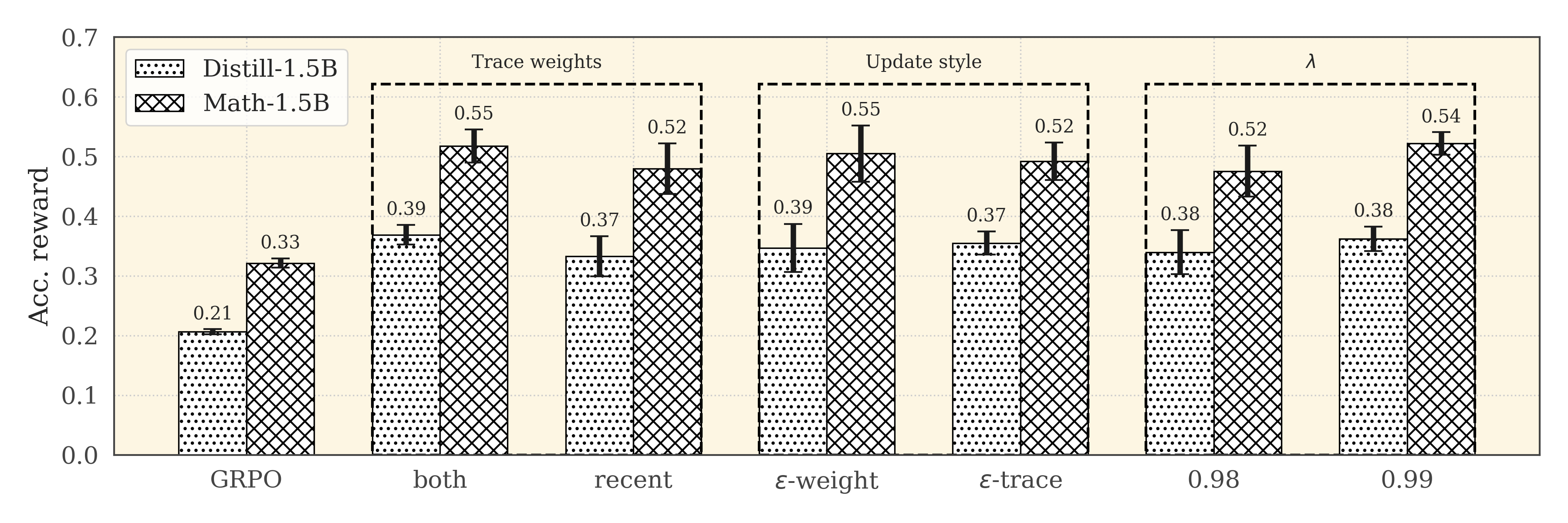}
  \caption{Comparison of the final accuracy reward (smoothened over the last 20 training iterations) for the token weighting schemes, for both 1.5B models trained on the ORZMath57K dataset. Overall, all token weighting schemes improve training accuracy compared the the GRPO baseline. Interestingly, the \emph{both}-style trace weight results in higher performance compared to the classic \emph{recent}-style, showing that alternative token weighting schemes could greatly improve model performance.}
  \label{fig:hparam-comparisons}
\end{figure}

First, we observe that the least sensitive hyperparameter is the token weighting style, as both $\epsilon$-weight and $\epsilon$-trace have similar average performance across all datasets. This leads to promising avenues for future work, by providing simple weighting mechanisms that focus on early and late tokens. Despite the similar performance, we stick with $\epsilon$-trace, which is supported theoretically by GAE and aligns with the PPO-style clipped surrogate objective function.

Next, the choice of $\lambda$ affects the back up, and the eligibility of the past states. As $\lambda$ approaches 1, it increases the eligibility for distant state, potentially accelerating the updates. The opposite is true when $\lambda$ approaches 0. We found the performance to be the best at $\lambda = 0.99$ across the different datasets and the two architectures.
Finally, for the trace weighting style, \emph{both} systematically outperforms GRPO, and sometimes the classic \emph{recent} style as well. An example of this can be seen in \autoref{fig:hparam-comparisons}, with the other datasets available in \autoref{appsec:hparam-comparisons-1.5B}. Recent work \citep{bachmann2024pitfalls} discusses phenomenon where as the sequence progresses, the next-tokens start falling in place thereby making the next token prediction slightly easier. So, while the better performance of weighting style \emph{both} compared to \emph{recent} is interesting from an RL standpoint, the LLM text generation presents not only a convincing explanation but also warrant further investigations for more domain specific and informed credit assignments in RL for LLM scenarios.

Based on the analysis, we pick the best configuration from the experiments across the two models to be ($\lambda$=0.99, weight style=both/recent, update style=$\epsilon$-trace), and ($\lambda$=0.99, weight style=recent, update style=$\epsilon$-trace) for the experiments on the $3$B and $7$B models respectively.

\subsection{Training Efficiency}
\label{sec:training-efficiency}

\begin{figure}[htbp]
  \centering
  \begin{subfigure}[b]{\textwidth}
    \centering
     \includegraphics[width=\textwidth]{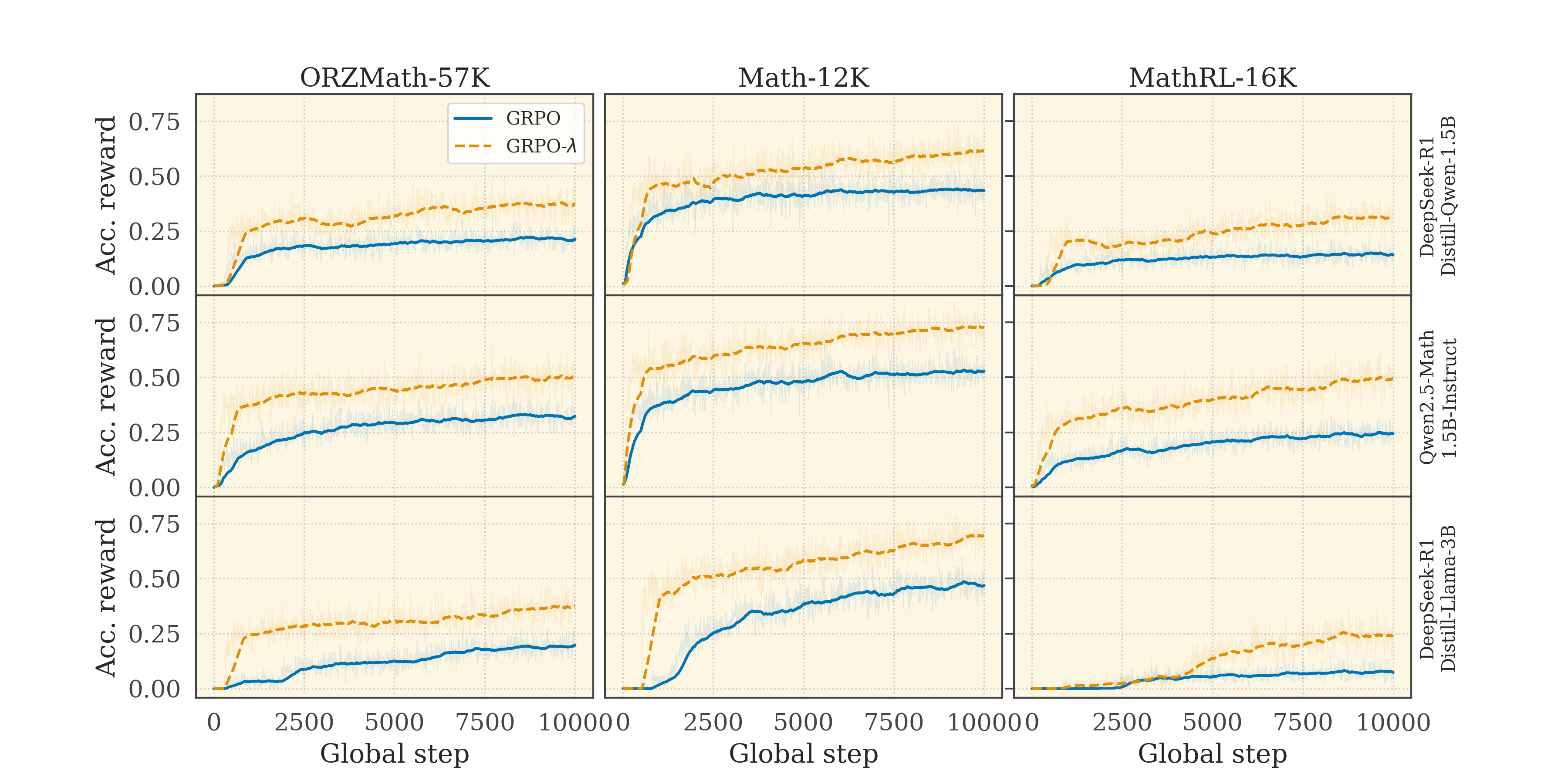}
  \end{subfigure}
  \hfill
  \begin{subfigure}[b]{\textwidth}
    \centering
    \includegraphics[width=\textwidth]{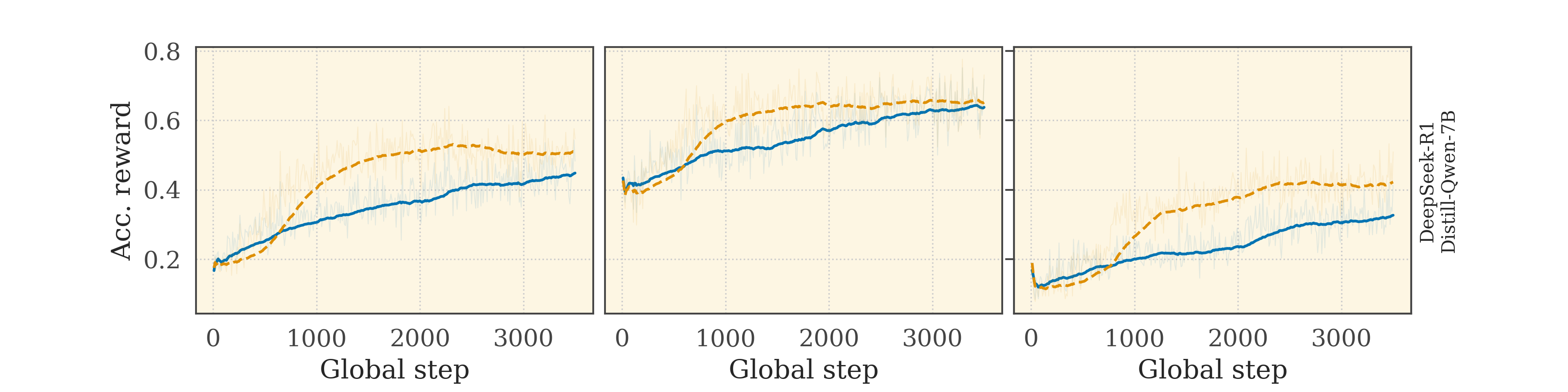}
  \end{subfigure}
  \caption{Comparison of training on different RL Math datasets between the best hyperparameter configuration of \method and GRPO across the 4 different models used throughout this paper (ordered by row). \method systematically outperforms GRPO in terms of accuracy reward during training.}
  \label{fig:training_efficiency}
\end{figure}

In \autoref{fig:training_efficiency}, we compare the training of GRPO with \method across the different RL-mathematical reasoning datasets. We excluded the training comparison on GSM8K, for some models include public datasets including GSM8K in their SFT or pretraining corpus thereby affecting the performance \footnote{For a complete training comparison across all RL finetuning datasets ref. \autoref{appsec:training-plots-all-models}.}. The training plots show a trend of improved training exhibited from using \method with the average gap between the different variants of \method and GRPO to be around $20$-$50$\%, while the performance itself is affected by the choice of the architectures, and the dataset to post-train. For example, the instruct-tuned Qwen2.5-Math-1.5B architecture performs significantly better than Deepseek-R1-Distill-Qwen-1.5B, which is an R1-distilled Qwen2.5-Base model. Likewise, with the size of the architecture the average performance across the methods increases. The 7B model significantly performs better than the 1.5B and 3B models, although, the gap between 3B model and 1.5B models are not very significant. We take this to be an artifact of the model for the base models in Qwen2.5 series have a significantly better performance over LLaMA-3.1 base \citep{yang2024qwen2,hui2024qwen2}. Depending on the dataset, the difficulty of the sampled mathematical problems varies significantly. For example, post-training on GSM8K results in a higher training performance compared to MathRL-16k or Math12k datasets, as GSM8k's questions are much easier to solve. Also, unsurprisingly the size of the architecture does affect the magnitude of the gains during training for the improvement on 7B model is much lower than on the smaller models.
Despite these differences in sizes, datasets and architectures, \method demonstrates a significant improvement over GRPO through applying the traces for improved reasoning with accelerated update resulting in (a) faster convergence across the models, and (2) improved performance on RL training across smaller architectures.

In addition to the accuracy reward evolution over the steps, we monitor the KL-divergence between the updated policy $\pi_{\theta_t}$ at timestep $t$ and the reference model $\pi_\text{ref}$. The accumulated log-probabilities in \method's objective function mean their gradients are larger than for GRPO, which increase the risk of deviating from $\pi_\text{ref}$. We observe that the KL-divergence stays low throughout training (ref \autoref{appsec:training-plots-all-models}), which shows that \method's increase in performance is not coming at the cost of overfitting. This is because we adopt two specific techniques to ensure a smooth and stable training:

\paragraph{Clamping the advantage function} For fine-tuning LLMs with RL, \cite{roux2025tapered} have observed that positive and negative returns in the policy gradient loss produce drastically different behavior in terms of gradient updates. Negative returns encourage $\pi$ to move as far as possible from the corresponding trajectory, which can act as a destructive force on model parameters. \method multiplies advantages instead of returns with $\log \pi_\theta(a_t, s_t)$, but\reb{, as-is, we observe similar trends as} \citeauthor{roux2025tapered}'s observations. \reb{To mitigate this issue, we adopt a similar approach, i.e., we clamp} negative advantages to a small value ($-0.1$ in our experiments). \reb{With the proposed clamping, the KL-divergence is stable, albeit higher than GRPO. We argue that a higher - but stable - KL-divergence may in fact improve learning, as a too strong KL regularization potentially limits exploration during policy optimization~\citep{hu2025openreasonerzeroopensourceapproach,zhang2025grpo}, and a high reguralization term ($\beta$) does not correlate with better learning~\citep{lambert2024tulu}.}


\paragraph{Mitigate reward hacking:}  We observed during training of \method on non-SFT'ed LLMs (Qwen2.5-Math-1.5B-Instruct) with the objective of optimizing both the ``format'' and ``accuracy'' of the response generated may lead to an unstable training, where the LLM learns to hack the reward functions to end up optimizing for the easier reward functions after a long number of steps \cite{skalse2022defining}. To avoid this behavior, we train LLM with single reward RL, to maximize the accuracy reward, and a pre-SFT step to improve formatting. We observe the format reward to stay high throughout the RL training without forgetting the formatting learnt in the SFT step.

The results across two different architectures (LLaMA3.1, and Qwen2.5) and different sizes 1.5B, 3B and 7B on 4 different training datasets demonstrate that \method is indeed stable and is much better than GRPO to train better on RL datasets through credit assignment. 

\subsection{Benchmark Performance}
\label{sec:benchmark-perf}
In \autoref{tab:benchmark-perf}, we compare the performance of LLMs post-trained with GRPO and \method on different train-datasets across $5$ challenging and popular math-reasoning benchmarks, AIME24, AMC, OlympiaBench, Math500 and MinervaMath. We observe that the average improvement that \method has over GRPO is quite significant. However, the choice of the dataset to post-train appears to have an effect on the benchmark performance.

\begin{table}[ht]
\centering
\small
\caption{Average performance for each evaluation benchmark across the different training datasets. This table only contains \method, not the variations introduced in \autoref{sec:alternative-weighting}.}
\label{tab:benchmark-perf}
\label{tab:grpo-vs-grpol-full}
\renewcommand{\arraystretch}{1.3}
{\rmfamily
\begin{tabular}{llc|c|c|c|c|c|c}
\toprule
\multirow{3}{*}{\textbf{Model}} & \multirow{3}{*}{\textbf{Method}} & \multicolumn{6}{c}{\textbf{Evaluation benchmark (accuracy)}} \\
\cmidrule(lr){3-8}
 &  & \textbf{Average} & \textbf{AIME24} & \textbf{AMC} & \textbf{MATH} & \textbf{Minerva} & \textbf{Olympiad} \\
\midrule
\multirow{2}{*}{Qwen-1.5B} 
    & GRPO   & 0.334 & 0.067 & 0.380 & 0.686 & 0.219 & 0.318 \\
    & \method & 0.346 & 0.104 & 0.381 & 0.699 & 0.215 & 0.333 \\
\hline
\multirow{2}{*}{R1-Distill-Qwen-1.5B} 
    & GRPO   & 0.335 & 0.117 & 0.416 & 0.675 & 0.191 & 0.278 \\
    & \method & 0.363 & 0.142 & 0.443 & 0.716 & 0.212 & 0.303 \\
\hline
\multirow{2}{*}{R1-Distill-LLaMA-3B} 
    & GRPO   & 0.142 & 0.042 & 0.092 & 0.309 & 0.114 & 0.092 \\
    & \method & 0.200 & 0.034 & 0.202 & 0.450 & 0.172 & 0.144 \\
\hline
\multirow{2}{*}{R1-Distill-Qwen-7B} 
    & GRPO   & 0.429 & 0.200 & 0.491 & 0.775 & 0.320 & 0.361 \\
    & \method & 0.451 & 0.217 & 0.518 & 0.800 & 0.334 & 0.384 \\
\bottomrule
\end{tabular}
}
\end{table}

First, models that have been trained on the ORZMath57K perform far worse on the evaluation tasks than the models trained on other benchmarks. This is consistent across multiple model architectures and sizes, be it for GRPO and for \method. Upon further investigation, we found that these models are much less accurate in providing an answer in the valid format. The effect of different datasets on the RL finetuning warrants a special treatment, which, however, is out of scope for this paper. Next, we observe that the 3B model performs worse on the evaluation benchmarks than the smaller 1.5B models. We believe this is due to the fact that the 3B model is the only one using a Llama architecture, while the other ones use Qwen2.5, which generally performs better than Llama3.1 on mathematical tasks~\citep{yang2024qwen2}. Finally, post-training on the MathRL-16k dataset results in particularly good performance on the evaluation benchmarks for both GRPO and \method, with \method resulting in an improvement of over 5 points ($0.0552$) across the benchmarks. This impressively leads to the 7B model post-trained with \method producing a correct answer $47.6\%$ of the time, as shown in \autoref{fig:7B results}.  We refer to \autoref{appsec:benchmark-all-checkpoints} for complete benchmarking performance of all the checkpoints.


\section{Limitations}
\label{sec:limitations}

While fine-tuning a LLM with \method greatly improve its training and evaluation performance compared to GRPO, it comes at the expense of a decrease in training stability. The KL-divergence between \method's fine-tuned $\pi_\theta$ and $\pi_\text{ref}$ is larger than for GRPO, and becomes order of magnitudes larger with the negative advantage clamping. Moreover, even though we derive a bound on the bias of using $V(s_0)$ instead of $V(s_t)$, explicitly incorporating this bias into the advantage estimation was detrimental to the policy improvement (see related experiments in \autoref{appsec:benchmark-all-checkpoints}\reb{, and more extended discussion in \autoref{full-proof-theorem}}). This seems to indicate that \method's objective function is quite sensitive. Additionally, all the experiments we performed are focused on mathematical reasoning datasets. It is unknown if we will also witness the same gain in performance we have seen on these benchmarks on other reasoning tasks, such as coding, or even general-purpose tasks. Finally, even though \autoref{tab:benchmark-perf} indicates that the improvement gap between \method and GRPO is larger on the 3B and 7B models than the 1.5B models, it remains to be seen if this improvement gap scales to models with even more parameters, e.g., the 32B or 72B variants of Qwen2.5, due to computation restrictions.



\section{Conclusion and Future work}

We present \method, which significantly outperforms GRPO both in terms of training and evaluation performance across 4 training datasets, 5 mathematical reasoning benchmarks, 3 different model sizes and 2 different model architectures. In contrast to GRPO, \method incorporates a reformulation of the generalized advantage estimation, allowing it to rapidly back-propagate the sequence reward to relevant tokens. We show that GRPO's advantage term increasingly biases value estimates for later tokens, spurring us to investigate alternative token weighting schemes that put a higher focus on early tokens. Extensive experiments show that this can result in higher performance than the classic traces used by GAE. This leads to an interesting avenue for future work, i.e., by analyzing the impact of different types of traces. Moreover, even though our experiments when incorporating the bound directly into the advantage estimate were inconclusive, we believe it warrants further investigation, to not only improve the accuracy of the updates, but also better stabilize \method's training.



\bibliography{iclr2026_conference}
\bibliographystyle{iclr2026_conference}

\newpage
\appendix

\section{\texorpdfstring{\method Theory}{\texttt{GRPO}-L Details}}
\label{appsec:complete-theory}

We propose a re-parametrization of the generalized advantage estimation so they can be taken advantage of by critic-free methods such as GRPO.

\subsection{Full Version of Lemma~\ref{lemma:DeltaV_bound}}

First, we take a look at GRPO's advantage estimation, which is centered around producing multiple rollouts from the start-state $s_0$, and using their resulting scores to estimate $\mu_t^i$, the mean return of all states $\left\{ s^0_{t}, \dots, s^{g-1}_{t} \right\}$ in group $\mathcal{G}$.

The mean $\mu_t^i$ serves as a baseline to reduce the variance of $G_t$ in PPO's policy update. Indeed, a learned function of the state $b(s_t)$ (where typically $b(s_t) = V_\pi(s_t)$) can be used as a baseline, keeping the policy gradient unbiased. However, $\mu_t^i$ does not really depend on $s_t$, as it is an average of returns starting from $s_0$. This difference $\Delta V(s_t) \equiv V(s_0) - V(s_t)$ and the bias it introduces is what we analyze in the following Lemma.

\label{full-proof-lemma}
\setcounter{lemma}{0} 
\setcounter{theorem}{0} 
\begin{lemma}
Considering an LLM post-training setting, i.e., a deterministic transition function where $s_t$ is defined by $a_{0:t}$, and a binary reward signal, $\Delta V(s_t) \equiv V(s_0) - V(s_t) \leq 1-\prod_{k=0}^{t-1}\sum_{a_k\neq a^\text{EOS}} \pi_\theta(a_k | s_k)$, where $a^\text{EOS}$ corresponds to the action generating the end-of-sequence token, and thus terminating the episode.
\end{lemma}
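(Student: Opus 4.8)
The plan is to exploit the two structural assumptions — deterministic transitions and a binary terminal reward with $\gamma = 1$ — to reinterpret the value function probabilistically. Under these assumptions $V(s)$ is exactly the probability that a rollout from $s$ ends on a correct answer, so $V(s)\in[0,1]$ everywhere. I would write $q_k\equiv\sum_{a_k\neq a^\text{EOS}}\pi_\theta(a_k\mid s_k)=1-\pi_\theta(a^\text{EOS}\mid s_k)$ for the per-step non-termination probability at the on-path state $s_k$, so that $1-\prod_{k=0}^{t-1}q_k$ is the complementary mass that terminates early along the path $s_0,\dots,s_t$. The whole argument is then an accounting of where the rollout mass from $s_0$ goes relative to that path.

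My primary route is to condition on the event that a rollout from $s_0$ follows the path to $s_t$. By determinism and the Markov property this event has probability $P_\text{reach}=\prod_{k=0}^{t-1}\pi_\theta(a_k\mid s_k)$ and, conditioned on it, the continuation has expected reward exactly $V(s_t)$, while the complementary event contributes some $c\in[0,1]$. Hence $V(s_0)=P_\text{reach}V(s_t)+(1-P_\text{reach})c$, giving $\Delta V(s_t)=(1-P_\text{reach})(c-V(s_t))\le 1-P_\text{reach}$ using $c\le 1$ and $V(s_t)\ge 0$. The same shape also drops out of an induction on $t$: the complementary Bellman identity $1-V(s_k)=\pi_\theta(a^\text{EOS}\mid s_k)(1-r(s_k))+\sum_{a\neq a^\text{EOS}}\pi_\theta(a\mid s_k)(1-V(s_{k+1}(a)))$ has only nonnegative summands, so keeping the on-path successor and dropping the rest telescopes the factors $\pi_\theta(a_k\mid s_k)$ along the path, yielding $V(s_0)\le 1-\Pi+\Pi\,V(s_t)$ with $\Pi=\prod_{k=0}^{t-1}\pi_\theta(a_k\mid s_k)$, and hence $\Delta V(s_t)\le(1-\Pi)(1-V(s_t))\le 1-\Pi$, with the base case $t=0$ giving equality.

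The crux — and the step I expect to be the main obstacle — is which survival factor appears in the product. The reach decomposition naturally produces the on-path token probabilities $\pi_\theta(a_k\mid s_k)$, whereas the statement asks for the full non-termination masses $q_k\ge\pi_\theta(a_k\mid s_k)$, a strictly tighter right-hand side. Bridging the two requires absorbing the \emph{off-path} non-EOS continuations: at step $k$ the on-path successor carries only probability $\pi_\theta(a_k\mid s_k)$ of the available mass $q_k$, so to charge the entire mass $q_k$ to the surviving branch one must bound the residual value $1-V(s_{k+1}(a))$ of every competing continuation (trivially by $1$, or more tightly by $1-V(s_t)$ when $s_t$ attains the largest value among depth-$t$ continuations). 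I would therefore verify the direction of each inequality in the telescoping carefully and state explicitly the sense in which $s_t$ is treated as the tracked depth-$t$ continuation, since this is exactly where the product over $q_k$ rather than over the on-path probabilities is justified.
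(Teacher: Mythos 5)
Your route is genuinely different from the paper's. You condition on the event that the rollout from $s_0$ actually reaches $s_t$ (equivalently, you telescope the complementary Bellman identity along the path), which cleanly yields $\Delta V(s_t) \le (1-\Pi)\,(1-V(s_t)) \le 1-\Pi$ with $\Pi=\prod_{k=0}^{t-1}\pi_\theta(a_k\mid s_k)$. The paper instead splits the return at time $t$, writes $V(s_0)=\mathbb{E}\left[\sum_{k=0}^{t-1}\gamma^k r_k\right]+\gamma^t V(s_t)$, uses $V\ge 0$ and $\gamma\le 1$ to get $\Delta V(s_t)\le\mathbb{E}\left[\sum_{k=0}^{t-1}\gamma^k r_k\right]$, and then bounds this expected pre-$t$ reward by the probability of emitting an EOS token before step $t$, under the additional worst-case assumption that every early termination yields reward $1$. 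That early-termination probability is what produces the product over the full non-termination masses $q_k=\sum_{a\neq a^\text{EOS}}\pi_\theta(a\mid s_k)$ rather than over the on-path token probabilities. So the two arguments bound different surrogate quantities: you bound the probability mass that leaves the path to $s_t$, while the paper bounds the reward that can be collected before time $t$.

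You have also correctly isolated the real difficulty: carried out rigorously, your decomposition delivers only $1-\Pi$, which is weaker than the stated $1-\prod_k q_k$, and upgrading $\pi_\theta(a_k\mid s_k)$ to $q_k$ requires controlling the values of the off-path non-EOS siblings (e.g., assuming the tracked continuation dominates them in value), which is not among the lemma's hypotheses. Be aware that the paper's route does not escape this either: the identity $V(s_0)=\mathbb{E}\left[\sum_{k<t}\gamma^k r_k\right]+\gamma^t V(s_t)$ silently identifies the expected tail return over the \emph{random} time-$t$ state with $V(s_t)$ for the one specific $s_t$ on the path --- exactly the off-path mass your conditioning makes explicit --- and it needs the ``every episode is successful'' assumption to convert expected early reward into termination probability. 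Your version is the more careful of the two about where the slack lies; what it does not do is reach the constant actually claimed in the statement without the extra dominance assumption you flag.
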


\begin{proof}

By definition, the value $V(s)$ is the expectation over the returns from any given state $s$. Thus, $V(s_0)$ can be written as,

\[
V_\pi(s_0) = \mathbb{E}_{\pi}\left[G_0\right] = \mathbb{E}_{\pi}\left[\sum_{t=0}^{T} \gamma^t r_t \right]
\]

The sum until $T$ can be split into two terms, $0-t$, and $t-T$. Then,

\begin{equation}
\begin{split}
V_\pi(s_0) & = \mathbb{E}\left[\sum_{k=0}^{t-1} \gamma^k r_k +\sum_{k=t}^{T} \gamma^k r_k\right] \\
& = \mathbb{E}\left[\sum_{k=0}^{t-1} \gamma^k r_k\right]] +\gamma^{t}\mathbb{E}\left[\sum_{k=t}^{T} \gamma^{k-t} r_k\right].
\end{split}
\end{equation}

The second term is the expected return from $s_t$, which is nothing but $V_\pi(s_t)$. Then,

\[
V(s_0)= \mathbb{E}\left[\sum_{k=0}^{t-1} \gamma^k r_k\right] +\gamma^{t} V(s_t),
\]

\begin{equation}
\begin{split}
\Delta V(s_t) & = V(s_0) - V(s_t) \\
 & \leq V(s_0) - \gamma^{t} V(s_t) \\
 & \leq \mathbb{E}\left[\sum_{k=0}^{t-1} \gamma^k r_k\right].
\end{split}
\end{equation}

In our LLM post-training setting, all intermediate rewards are zero, i.e., $r_t = 0, \forall t < T$. When an EOS-token is selected as action, the reward $r_T$ is either 1 (the policy provided the correct answer to the problem) or 0 (the policy provided an incorrect answer). Thus, $V(s_t) \geq 0, \forall s_t$. More interestingly, $\mathbb{E}\left[\sum_{k=0}^{t-1} \gamma^k r_k\right] > 0$ only if there exist cases where the episode ends before timestep $t$.

Assuming every episode is successful, i.e., every EOS-token yields a reward of 1, then $\mathbb{E}\left[\sum_{k=0}^{t-1} \gamma^k r_k\right]$ is upper-bounded by the probability of generating at least one EOS-token at any time during an episode. Thus,

\begin{equation}
\Delta V(s_t) \leq \mathbb{E}\left[\sum_{k=0}^{t-1} \gamma^k r_k\right] \leq 1 - \underbrace{\prod_{k=0}^{t-1}\sum_{a_k\neq a^\text{EOS}} \pi_\theta(a_k | s_k)}_{\substack{\text{probability of not generating} \\ \text{at least one EOS-token in $t$ timesteps}}}.
\end{equation}

\end{proof}

\subsection{Full Version of Theorem~\ref{theorem-grppo}}
\label{full-proof-theorem}

Next, we repurpose the bias-variance trade-off from GAE towards a critic-free policy-update like GRPO.

\begin{theorem}
The policy gradient estimate $\hat{g}$ using traces from generalized advantage estimation $A_\text{GAE}$ can be re-parameterized with a critic-free TD-error $\delta_t$ such that $\hat{g} = \sum_{t=0}^\infty A_\text{GAE}(s_t)\nabla_\theta\log\pi_\theta(a_t|s_t) = \sum_{t=0}^\infty\delta_t\sum_{l=0}^t(\gamma\lambda)^l\nabla_\theta\log\pi_\theta(a_{t-l}|s_{t-l})$.
\end{theorem}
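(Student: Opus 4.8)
The plan is to start from the standard definition of the generalized advantage estimate as an exponentially discounted sum of \emph{future} temporal-difference errors, $A_\text{GAE}(s_t) = \sum_{l=0}^\infty (\gamma\lambda)^l \delta_{t+l}$, substitute this into the left-hand policy-gradient expression, and then reorganize the resulting double sum so that each TD error $\delta_t$ collects contributions from \emph{past} log-probability gradients, rather than each state collecting future TD errors. This swap from a forward-looking to a backward-looking accumulation is precisely the eligibility-trace reparameterization the theorem claims, and the whole proof is essentially a careful reindexing of a triangular array.

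Concretely, I would first write
\[
\hat g = \sum_{t=0}^\infty \Bigl(\sum_{l=0}^\infty (\gamma\lambda)^l \delta_{t+l}\Bigr)\nabla_\theta\log\pi_\theta(a_t|s_t),
\]
then change the inner summation variable from the relative offset $l$ to the absolute index $k = t+l$, turning the inner sum into $\sum_{k=t}^\infty (\gamma\lambda)^{k-t}\delta_k$. The core step is to interchange the order of the two summations: the index pair $(t,k)$ ranges over $0 \le t \le k < \infty$, so summing first over $t$ for each fixed $k$ yields
\[
\hat g = \sum_{k=0}^\infty \delta_k \sum_{t=0}^{k} (\gamma\lambda)^{k-t}\nabla_\theta\log\pi_\theta(a_t|s_t).
\]
Finally, reintroducing the backward offset $l = k-t$ (so that $t = k-l$ and $l$ runs from $0$ to $k$) and renaming $k$ to $t$ reproduces exactly the claimed right-hand side $\sum_{t=0}^\infty \delta_t \sum_{l=0}^t (\gamma\lambda)^l \nabla_\theta\log\pi_\theta(a_{t-l}|s_{t-l})$.

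The step that requires the most care is the interchange of summation order, which in the general infinite-horizon case needs an absolute-convergence (Fubini-type) justification; bounded rewards together with $\gamma\lambda \in [0,1)$ make the weights summable, so this is clean. In the LLM post-training setting this concern largely evaporates, since every episode terminates at a finite $T$ and all the sums are in fact finite, reducing the argument to a purely combinatorial reindexing. I would also flag that the critic-free substitution $\delta_t = A_\text{NAE}$ advertised after the statement is a separate modeling choice and not part of establishing the identity, so the proof should keep $\delta_t$ generic and treat the $\nabla_\theta\log\pi_\theta$ factors as arbitrary.
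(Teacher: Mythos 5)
Your proposal is correct and follows essentially the same route as the paper's proof: both substitute the forward-looking GAE sum $A_\text{GAE}(s_t)=\sum_{l=0}^\infty(\gamma\lambda)^l\delta_{t+l}$ into the policy gradient and regroup the resulting triangular double sum so that each $\delta_t$ collects past log-probability gradients (the paper does this by explicitly expanding and regrouping terms, you do it by formal reindexing). Your added remarks --- that the interchange of summations needs a Fubini-type justification which is trivial here because episodes terminate, and that $\delta_t=A_\text{NAE}$ is a separate modeling choice outside the identity --- are both correct and slightly more careful than the paper's treatment.
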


\begin{proof}

We reorganize the many terms of the policy gradient formula\footnote{With the existence of REINFORCE \cite{williams1992simple} and policy gradient methods, several works (e.g., \cite{sun2018truncated}) have used the reformulation of the policy gradients under different settings.} so that the gradient is of the form: $g = \mathbb{E}_\pi\left[\sum_{t=0}^\infty{r_t \psi_t}\right]$, where $\psi_t$ can depend only on the states, actions, and rewards that occurred before (or immediately after) the arrival of $r_t$. We will approximate for an online estimator of the policy gradient $\hat{g}$:

\begin{align} \hat{g} &= \sum_{t=0}^\infty{A_\text{GAE}(s_t) ~ \nabla_\theta \log ~ \pi_\theta(a_t \mid s_t)} \\ & = \sum_{t=0}^\infty{ \nabla_\theta \log ~ \pi_\theta(a_t \mid s_t) ~ \sum_{l=0}^\infty{(\gamma\lambda)^l \delta_{t+l}^V}} \end{align}

Let us introduce the following shorthand:

\begin{align} \nabla_t \coloneqq \nabla_\theta \log ~ \pi_\theta(a_t \mid s_t), \end{align}
then expand the sum:

\begin{align} \hat{g} &= \nabla_0 (\delta_0^V + (\gamma\lambda)\delta_1^V + (\gamma\lambda)^2\delta_2^V + ~ \dots ) \nonumber \\ & ~~ + \nabla_1 (\delta_1^V + (\gamma\lambda)\delta_2^V + (\gamma\lambda)^2\delta_3^V + ~ \dots ) \\ & ~~ + \nabla_2 (\delta_2^V + (\gamma\lambda)\delta_3^V + (\gamma\lambda)^2\delta_4^V + ~ \dots ) \nonumber \\ & ~~ + ~ \dots \nonumber \end{align}

group the $\delta_t^V$ terms:

\begin{align} \hat{g} &= \delta_0^V \nabla_0 \nonumber \\ & ~~ + \delta_1^V ( \nabla_1 + (\gamma\lambda)\nabla_0 ) \\ & ~~ + \delta_2^V ( \nabla_2 + (\gamma\lambda)\nabla_1 + (\gamma\lambda)^2\nabla_0 ) \nonumber \\ & ~~ + ~ ... \nonumber \end{align}
and summarize:

\begin{align} 
\hat{g} &= \sum_{t=0}^\infty{\delta_t^V \sum_{l=0}^t{(\gamma\lambda)^l \nabla_{t-l}}} \\
&= \sum_{t=0}^\infty{\delta_t^V \sum_{l=0}^t{(\gamma\lambda)^l \nabla_\theta\log\pi_\theta(a_{t-l}\mid s_{t-l})}}
\end{align}

Moreover, by defining eligibility trace as the inner sum in that equation:

\begin{align} \epsilon_t := \sum_{l=0}^t{(\gamma\lambda)^l \nabla_{t-l}} \end{align}
and converting to a recursive formula:

\begin{align} \epsilon_0 &:= \nabla_0 \\ \epsilon_t &:= (\gamma\lambda) \epsilon_{t-1} + \nabla_t, \end{align}

we have our online generalized advantage estimator for the policy gradient:

\begin{align} \hat{g} = \sum_{t=0}^\infty{\delta_t^V \epsilon_t} \end{align}

So at each time-step, we compute the gradient term $\hat{g}_t = \delta_t^V \epsilon_t$ as the product of the TD error. The role of $\lambda \in [0,1]$ is unchanged, remaining a bias-variance trade-off. For $\lambda = 0$, the problem reduces to the (unbiased) TD(0) function. As we increase $\lambda$ towards 1, we reduce the variance of our estimator but increase the bias.

\end{proof}

\paragraph{Relation with GRPO's normalized advantage estimation (NAE)}

In this work, we consider $\delta_t^V = A_\text{NAE}(s_t)$. The advantage is often used in policy-gradient methods~\cite{mnih2016asynchronous}, where subtracting the value-estimate of the current state is used as a variance-reduction technique:

\[
\delta_t^V = G_t - V(s_t).
\]

However, $A_\text{NAE}(s_t)$ subtracts $V(s_0)$ instead of $V(s_t)$. Using $\Delta V(s_t)\equiv V(s_0) - V(s_t)$ introduced in \autoref{lemma:DeltaV_bound} results in:

\[
\delta_t^V = G_t - V(s_0) + \Delta V(s_t).
\]

We can thus potentially reduce the bias from using $V(s_0)$ by including knowledge about $\Delta V(s_t)$ in $\delta_t^V$. 

In preliminary experiments, we simply added the upper bound for $\Delta V(s_t)$ to $A_\text{NAE}(s_t)$. 

\reb{However, results were inconclusive. The bias bound assumes that the policy is optimal, resulting in the highest potential value for $\Delta V$. GRPO removes this term, resulting in a pessimistic advantage value (compared to having an actual critic). By naively adding the upper bound of $\Delta V$, we surmise this might be over-correcting the bias term, resulting in an optimistic advantage value. This means that including the bound could reinforce action-probabilities that lead to 0-rewards (encouraging false positives) while, in the case of GRPO, it instead reduces action-probabilities that lead to 1-rewards (penalizing the false negatives). In the former case, we are stuck in suboptimal behavior, while in the latter, there could be alternative paths towards solving the problem. As such, a more careful bias correction term is needed. It is important to mention that this only focuses on $\Delta_V$ in the GRPO setup, not \method. Adding eligibility traces balances the bias-variance trade-off, which is one of the reasons why \method outperforms GRPO.} We aim to further this direction in future work.

\subsection{Pseudocode}

In \autoref{sec:method}, we introduce \method, the variant $\epsilon$-trace follows the traditional RL literature and applies the trace before the PPO clipping objective. \autoref{algo: GRPO-L e-trace} (left) represents \method, with the traces derived from \autoref{theorem-grppo}. Likewise, the trace-inspired $\epsilon$-weight variant is illustrated in \autoref{algo: GRPO-L e-weight}. We color code the pseudocodes where \textcolor{blue}{blue} denotes the modifications on the GRPO algorithm done for the $\epsilon$-trace variant of the \method and \textcolor{red}{red} denote the modifications for the $\epsilon$-weight variant.

A less rigorous explanation of $\epsilon$-weight variant is that the loss at each step (token) essentially gets reweighted implicitly with $\epsilon$-trace. $\epsilon$-weight does this reweighting of the loss explicitly. To that, this uses the PPO-clip to provide the td-error, $\delta^V$. Multiplying the trace weights with $\log \pi^{\theta}_{x_r^i}$ provides the traces, $\epsilon^w_{x_r^i}$, which then multiplied with the advantage estimated explicitly weighs the loss of different tokens ($\epsilon^w_{x^i_r}A^{\theta}_{x_r^i}\log {\pi_{\theta}}$). However, we observe multiplying that $\log \pi^{\theta}_{x_r^i}$ results in instability. We alleviate that by soft clamping $\log \pi^{\theta}_{x^r_i}$ with $1+\sigma(\log \pi_{x_r^i}^{\theta}-1)$. The choice of the clamping function, $f$, \texttt{sigmoid} can be replaced with \texttt{tanh}.  The clamping function, $f$, (a) ensures linear dependence on the gradient and preserve the spirit of the traces, and (b) acts as a regularization to prevent extreme values. 

\noindent
\begin{minipage}[t]{0.48\textwidth}
\begin{algorithm}[H]
\caption{GRPO-$\lambda$ ($\epsilon$-trace)}
\label{algo: GRPO-L e-trace}
\small
\KwIn{$G$, $\pi^\theta$, $\pi^{ref}$, $\gamma$, $\epsilon_\text{style}$, $\epsilon_\text{clip}$, $\beta$, $A_\text{NAE}$}
\For{$i \in |G|$}{
    $\hat{A}_\text{NAE} \gets \max(A_\text{NAE}, -0.1)$\;
    \textcolor{blue}{$\mathrm{coef}_1 \gets$ \begin{align*}
     \exp(& \sum_{l=0}^t(\gamma\lambda)^l\log \pi^{\theta}_{x^i_{t-l}} - \\ &\sum_{l=0}^t(\gamma\lambda)^l\log\pi^{old}_{x^i_{t-l}})
    \end{align*}}
    
    $\mathrm{coef}_2 \gets \text{clamp}(\mathrm{coef}_1,\ 1 \pm \epsilon_\text{clip})$\;
    \textcolor{blue}{$\delta^V_{x^i_r} \gets \hat{A}_\text{NAE}(x_r^i)$}\;
    $\ell^{\method}_{x^i_t} \gets -\min(\mathrm{coef}_1 , \mathrm{coef}_2) \delta^V_{x^i_r}$\;
    $\ell_{\method} \gets \ell_{\method} + \beta \cdot \ell_{KL}$\;
}
\end{algorithm}
\end{minipage}
\hfill
\begin{minipage}[t]{0.48\textwidth}
\begin{algorithm}[H]
\caption{GRPO-$\lambda$ ($\epsilon$-weight)}
\label{algo: GRPO-L e-weight}
\small
\KwIn{$G$, $\pi^\theta$, $\pi^{ref}$, $\gamma$, $\epsilon_\text{style}$, $\epsilon_\text{clip}$, $\beta$, $A_\text{NAE}$}
\For{$i \in |G|$}{
    $\hat{A}_\text{NAE} \gets \max(A_\text{NAE}, -0.1)$\;
    $\mathrm{coef}_1 \gets \exp(\log \pi^{\theta}_{x^i_t} - \log \pi^{old}_{x^i_t})$\;
    $\mathrm{coef}_2 \gets \text{clamp}(\mathrm{coef}_1,\ 1 \pm \epsilon_{clip})$\;
    $w_t \gets \sum_{l=0}^t(\gamma\lambda)^l$\;
    \textcolor{red}{$A^{\pi_{\theta}}_{x^i_r} \gets -\min(\mathrm{coef}_1, \mathrm{coef}_2)  \hat{A}_\text{NAE}(x_r^i)$} \;
    \textcolor{red}{$\epsilon^w_{x^i_r} \gets w\ @\ (1 + \sigma(\log \pi^{\theta}_{x^i_r} - 1))$}\;
    \textcolor{red}{$\ell_{\method} \gets \epsilon^w_{x^i_r}  A^{\pi_{\theta}}_{x^i_r}$}\;
    $\ell_{\method} \gets \ell_{\method} + \beta \cdot \ell_{KL}$ 
}
\end{algorithm}
\end{minipage}

\paragraph{Upper bound $\Delta V_t$} The estimation of $\delta^V$ by \autoref{theorem-grppo} requires an upper bound for $\Delta V_t$. In \autoref{lemma:DeltaV_bound} we derive the upper bound to be the probability of generating an EOS-token. Our implementation does not use this upper bound in its advantage.

\paragraph{Trace weights} Trace matrix is a non-learnable precomputed lower-triangular matrix with 1s on the leading diagonal. The two variants of \method uses \emph{recent} and \emph{both} styles for the trace-weights, which is computed with \texttt{get\_trace()}. The \texttt{get\_trace()} method takes in as arguments: ($\gamma$, $\lambda$, max\_length, style: {both, recent}). For the choice \textit{both}, the trace matrix is estimated as:

\begin{align}
\text{trace}^{both} = 
\begin{cases}
1, & \text{if } \text{rows} = \text{cols} \\
\max\left( 
    \max\left( \epsilon,\ (\gamma \lambda)^{n - \text{cols}} \right),\ 
    (\gamma \lambda)^{\text{cols}} 
\right), & \text{otherwise}
\end{cases},
\end{align}
for \textit{recent}:
\begin{align}
\text{trace}^{recent} = 
\begin{cases}
1, & \text{if } \text{rows} = \text{cols} \\
\max\left( \epsilon,\ (\gamma \lambda)^{\max(\text{cols}) - \text{cols}} \right), & \text{otherwise}
\end{cases}
\end{align}

Usage of weight style \textit{both} as an alternate to \textit{recent}, and the strong training and benchmark performance that this provides is encouraging and serves as a precursor to explore alternate weighting styles that are domain or data dependent.  


\newpage
\section{Hyperparameters}
\label{appsec:complete-hparam-config}

This section provides a complete overview of all hyperparameters used to run our experiments. Our codebase is based on \href{https://github.com/huggingface/open-r1/tree/29015d7e4c111149592c4ca30ed23daa13c7fd45}{Huggingface OpenR1}. 

For evaluation, we use \href{https://github.com/sail-sg/understand-r1-zero}{the understanding-r1-zero codebase}.

\begin{table}[h]
    \centering
    \caption{Hyperparameter configurations used for \method.}
    \label{tab:hyperparameters}
    \begin{tabular}{l|c|c|c}
        \toprule
        Model size & 1.5B & 3B & 7B \\
        \midrule
        Precision & \multicolumn{3}{c}{\texttt{bf16}} \\
        Distributed type & \multicolumn{3}{c}{Deepspeed} \\
        Number of devices & \multicolumn{3}{c}{4} \\
        \hline
        \midrule
        \multicolumn{4}{c}{Supervised Finetuning stage} \\
        \hline
        
        Epochs & \multicolumn{3}{c}{1} \\
        Max sequence length & \multicolumn{3}{c}{4096} \\
        Learning rate & \multicolumn{3}{c}{$2\times10^{-5}$} \\
        Per device batch-size & \multicolumn{3}{c}{2} \\
        Gradient accumulation steps & \multicolumn{3}{c}{8} \\
        Full finetuning (no LoRA) & yes & yes & no \\
        \hline
        \midrule
        \multicolumn{4}{c}{GRPO and \method configuration} \\
        \hline
        Training steps & \multicolumn{2}{c
        |}{10000} & 3500 \\
        Maximum prompt length & \multicolumn{3}{c}{256} \\
        Group size (number of generations) & \multicolumn{3}{c}{8} \\
        Maximum gradient norm & \multicolumn{3}{c}{$1.0$} \\
        KL-divergence coefficient ($\beta$) & \multicolumn{3}{c}{$0.04$} \\
        Accuracy reward weight & \multicolumn{3}{c}{$1.0$} \\
        Format reward weight & \multicolumn{3}{c}{$0.0$} \\
        Maximum completion length & $256$ & $256$ & $768$ \\
        Per device batch-size & $16$ & $16$ & $8$ \\
        Gradient accumulation steps & $1$ & $1$ & $2$ \\
        Full finetuning (no LoRA) & yes & yes & no \\
        \hline
        \midrule
        \multicolumn{4}{c}{\method specific configuration} \\ 
        
        \midrule
        Advantage clamping & \multicolumn{3}{c}{$-0.1$} \\
        \bottomrule
    \end{tabular}
    
\end{table}



\newpage
\section{Training plots}
\label{appsec:training-plots-all-models}

This section provides the training plots of all the experiments performed in this work. We show that, regardless of the value for $\lambda$, the weighting update ($\epsilon$-trace, $\epsilon$-weight) or the type of trace (both, recent) used, \method has a higher training accuracy than GRPO.

Additionally, we show the plots comparing the KL-divergence between $\pi_\theta$ and $\pi_\text{ref}$. While the KL-divergence is higher for \method than for GRPO, it remains quite stable over the training duration. The crucial aspect to avoid explosion of KL-divergence is the clamping of negative advantages to~$-0.1$, inspired by \cite{roux2025tapered}.
\newpage
%
\subsection{Qwen2.5-Math-1.5B-Instruct}
\begin{figure}[h]
\begin{subfigure}{\textwidth}
    \centering
    \raggedright 
    \includegraphics[width=\textwidth]{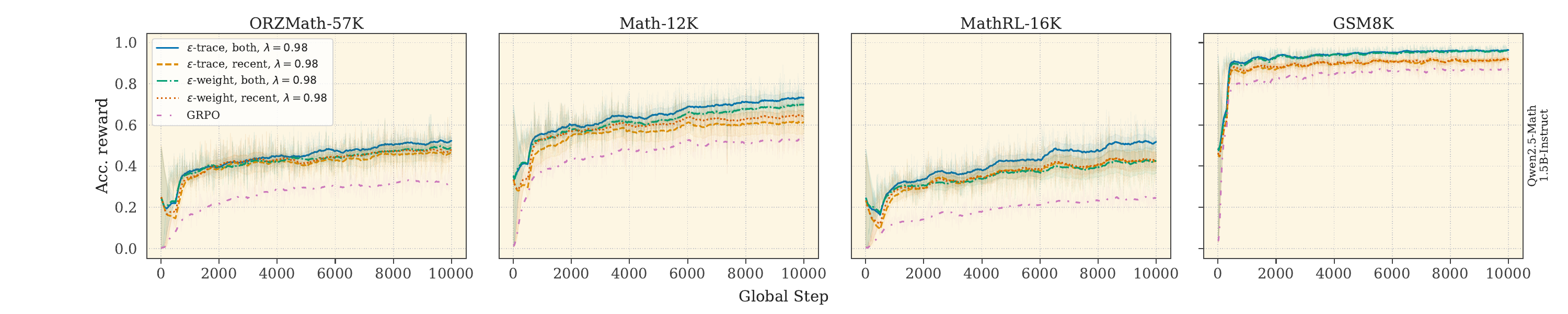}
    \caption{Comparison of training reward over time across all hyperparameters with $\lambda=0.98$ on Qwen2.5-Math-Instruct-1.5B.}
    \label{fig:qwen2.5-math-instruct-1.5B-hp-lambda-0.98-reward}
\end{subfigure}\\
\begin{subfigure}{\textwidth}
    \centering
    \raggedright 
    \includegraphics[width=\textwidth]{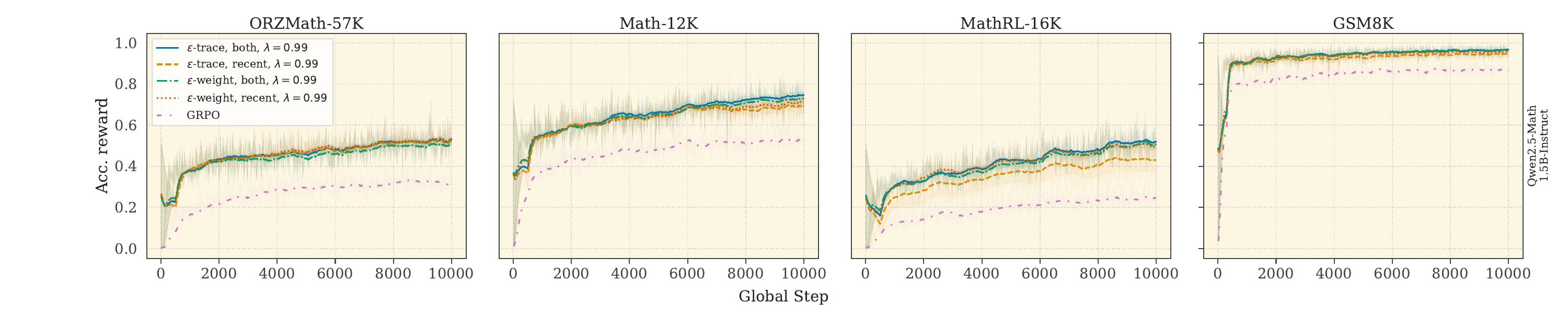}
    \caption{Comparison of training reward over time across all hyperparameters with $\lambda=0.99$ on Qwen2.5-Math-Instruct-1.5B.}
    \label{fig:qwen2.5-math-instruct-1.5B-hp-lambda-0.99-reward}
\end{subfigure}
\end{figure}
%
\begin{figure}[h]
\begin{subfigure}{\textwidth}
    \centering
    \raggedright 
    \includegraphics[width=\textwidth]{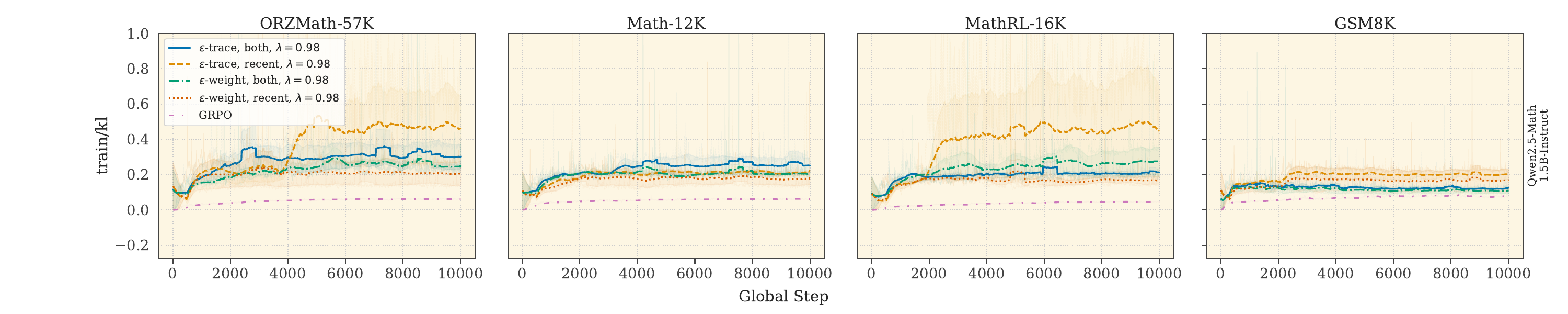}
    \caption{Comparison of KL($\theta_t||\theta_{ref}$) across all hyperparameters with $\lambda=0.98$ on Qwen2.5-Math-Instruct-1.5B.}
    \label{fig:qwen2.5-math-instruct-1.5B-hp-lambda-0.98-kl}
\end{subfigure}
\begin{subfigure}{\textwidth}
    \centering
    \raggedright 
    \includegraphics[width=\textwidth]{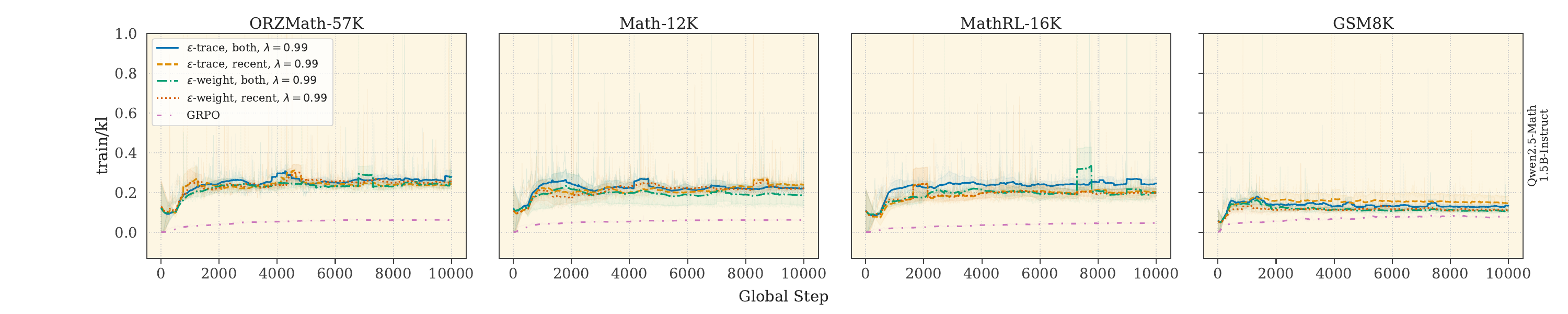}
    \caption{Comparison of KL($\theta_t||\theta_{ref}$) across all hyperparameters with $\lambda=0.99$ on Qwen2.5-Math-Instruct-1.5B.}
    \label{fig:qwen2.5-math-instruct-1.5B-hp-lambda-0.99-kl}
\end{subfigure}
\end{figure}
\newpage

\subsection{Deepseek-R1-Distill-Qwen-1.5B}
\FloatBarrier
\begin{figure}[h]
\begin{subfigure}{\textwidth}
    \centering
    \raggedright 
    \includegraphics[width=\textwidth]{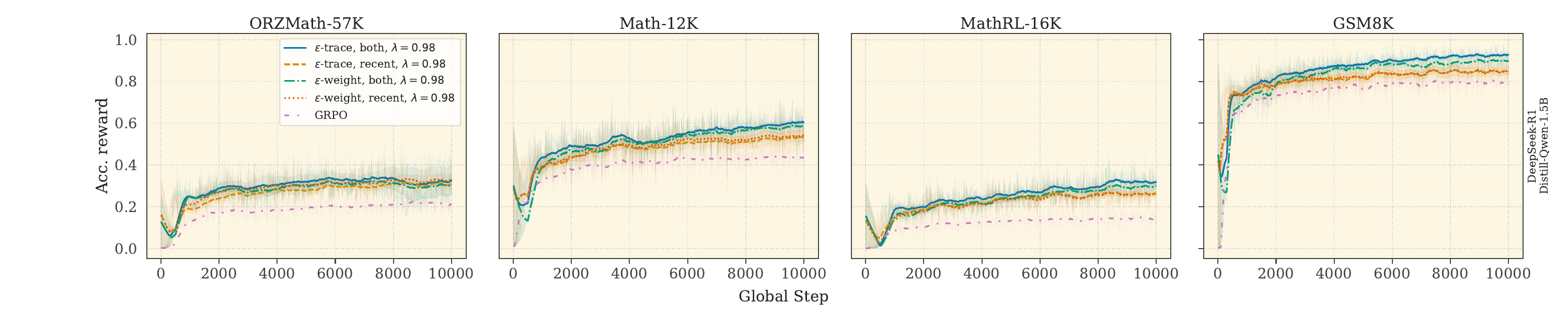}
    \caption{Comparison of training reward over time across all hyperparameters with $\lambda=0.98$ on DeepSeek-R1-Distill-Qwen-1.5B.}
    \label{fig:distill-qwen-1.5B-hp-lambda-0.98-reward}
\end{subfigure}
\begin{subfigure}{\textwidth}
    \centering
    \raggedright 
    \includegraphics[width=\textwidth]{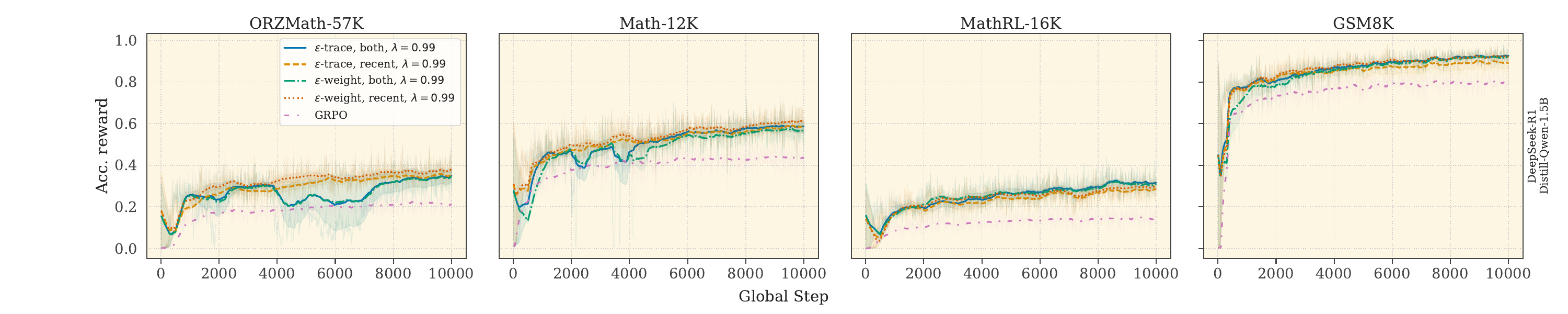}
    \caption{Comparison of training reward over time across all hyperparameters with $\lambda=0.99$ on DeepSeek-R1-Distill-Qwen-1.5B.}
    \label{fig:distill-qwen-1.5B-hp-lambda-0.99-reward}
\end{subfigure}
\end{figure}
%
\begin{figure}[h]
\begin{subfigure}{\textwidth}
    \centering
    \raggedright 
    \includegraphics[width=\textwidth]{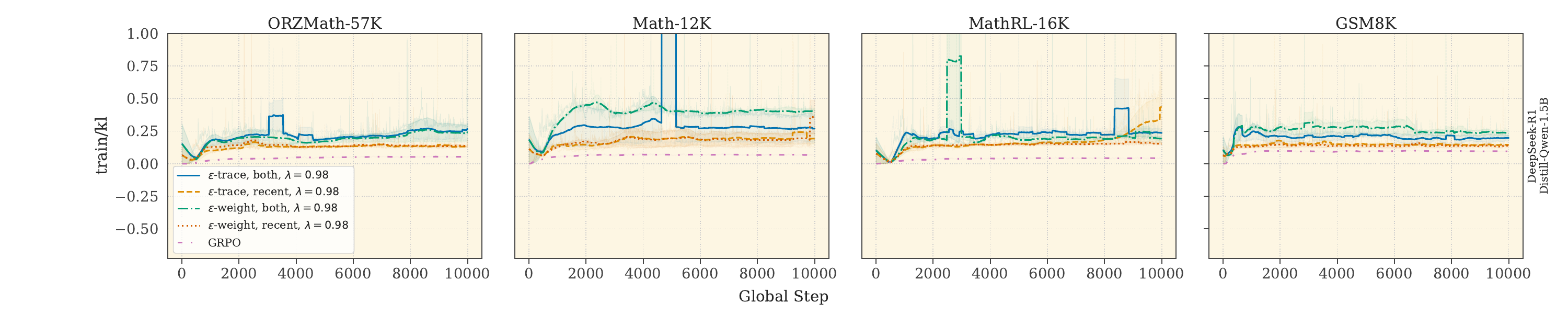}
    \caption{Comparison of KL($\theta_t||\theta_{ref}$) across all hyperparameters with $\lambda=0.98$ on DeepSeek-R1-Distill-Qwen-1.5B.}
    \label{fig:distill-qwen-1.5B-hp-lambda-0.98-kl}
\end{subfigure}
\begin{subfigure}{\textwidth}
    \centering
    \raggedright 
    \includegraphics[width=\textwidth]{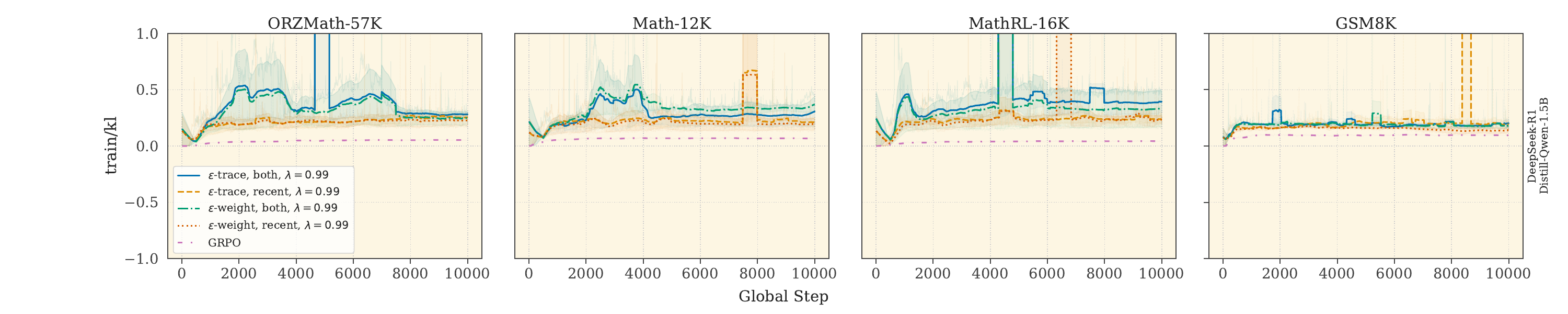}
    \caption{Comparison of KL($\theta_t||\theta_{ref}$) across all hyperparameters with $\lambda=0.99$ on DeepSeek-R1-Distill-Qwen-1.5B.}
    \label{fig:distill-qwen-1.5B-hp-lambda-0.99-kl}
\end{subfigure}
\end{figure}
\newpage

\subsection{Deepseek-R1-Distill-LLaMA-3B}
\begin{figure}[h]
\begin{subfigure}{\textwidth}
    \centering
    \raggedright 
    \includegraphics[width=\textwidth]{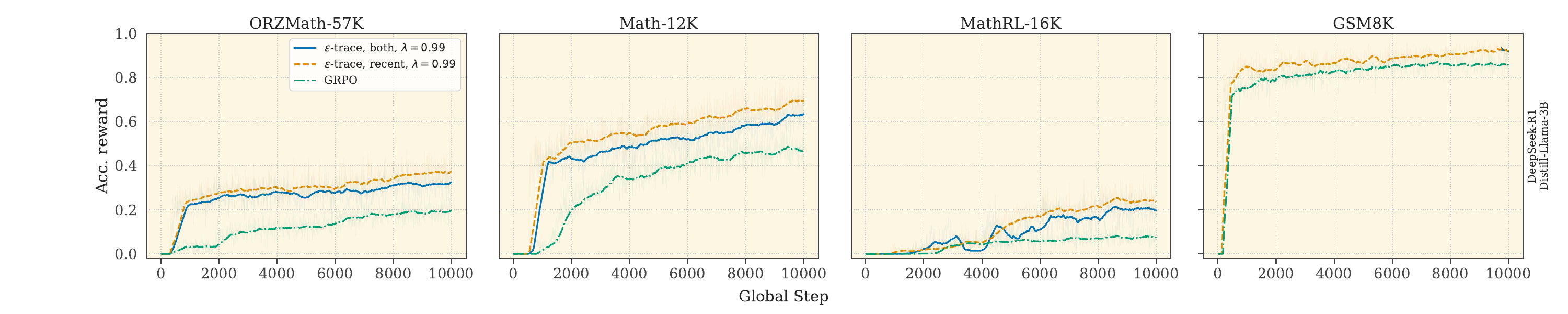}
    \caption{Comparison of training reward over time across all hyperparameters with $\lambda=0.99$ on R1-Distill-Llama-3B.}
    \label{fig:distill-llama-3b-hp-reward}
\end{subfigure}
\begin{subfigure}{\textwidth}
    \centering
    \raggedright 
    \includegraphics[width=\textwidth]{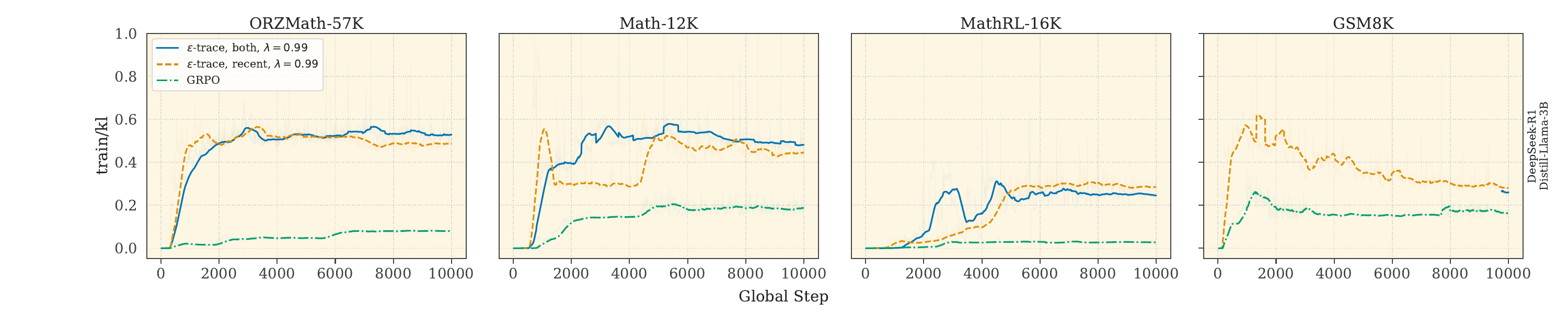}
    \caption{Comparison of KL($\theta_t||\theta_{ref}$) across all hyperparameters with $\lambda=0.99$ on DeepSeek-R1-Distill-Llama-3B.}
    \label{fig:distill-llama-3b-hp-kl}
\end{subfigure}
\end{figure}
\newpage
\subsection{Deepseek-R1-Distill-Qwen-7B}
\begin{figure}[h]
\begin{subfigure}{\textwidth}
    \centering
    \raggedright 
    \includegraphics[width=\textwidth]{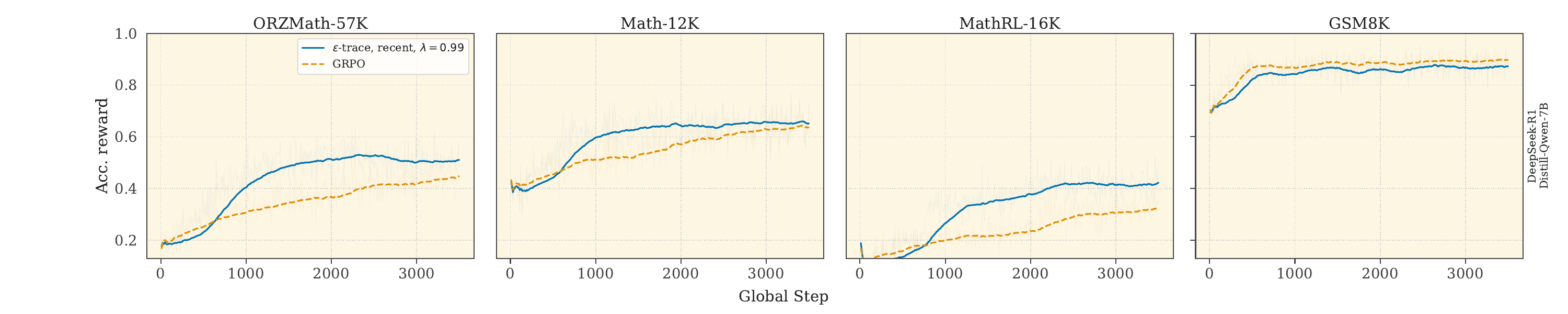}
        \caption{Comparison of  training reward over time across all hyperparameters with $\lambda=0.99$ on DeepSeek-R1-Distill-Qwen-7B.}
    \label{fig:distill-qwen-7b-hp-reward}
\end{subfigure}
\begin{subfigure}{\textwidth}
    \centering
    \raggedright 
    \includegraphics[width=\textwidth]{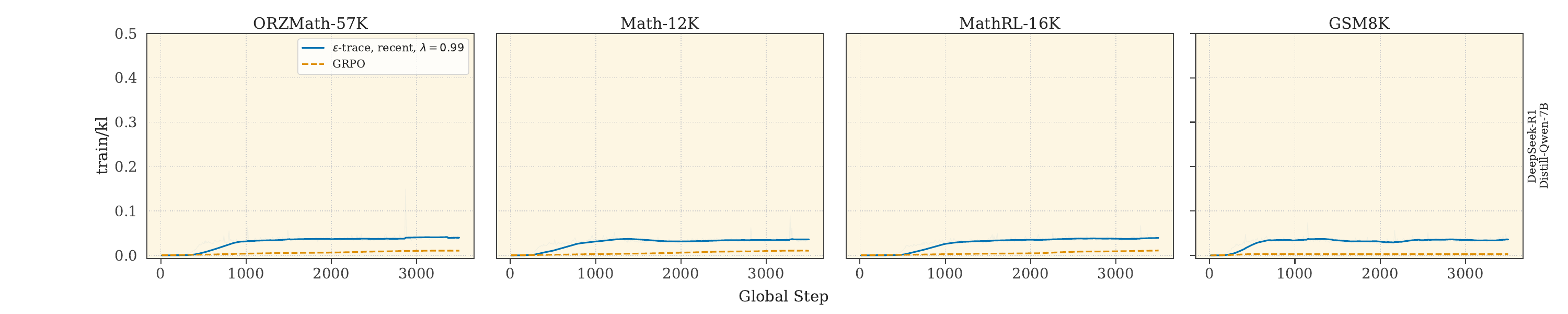}
    \caption{Comparison of KL($\theta_t||\theta_{ref}$) across all hyperparameters with $\lambda=0.99$ on DeepSeek-R1-Distill-Qwen-7B.}
    \label{fig:distill-qwen-7b-hp-kl}
\end{subfigure}
\end{figure}


\newpage
\section{Hyper-parameter comparisons}
\label{appsec:hparam-comparisons-1.5B}
\FloatBarrier

\autoref{fig:hparam-comparisons} in the main manuscript shows the final training performance for each variation of each 1.5B parameter model, using the ORZMath57K dataset for RL post-training. Here, we show the same training performance plots for the models post-trained on the other datasets. The observations made in \autoref{sec:different-token-weighting-analysis} remain valid for the other datasets.

\begin{figure}[h]
    \centering
\begin{subfigure}{\linewidth}
    \centering
    \includegraphics[width=\linewidth]{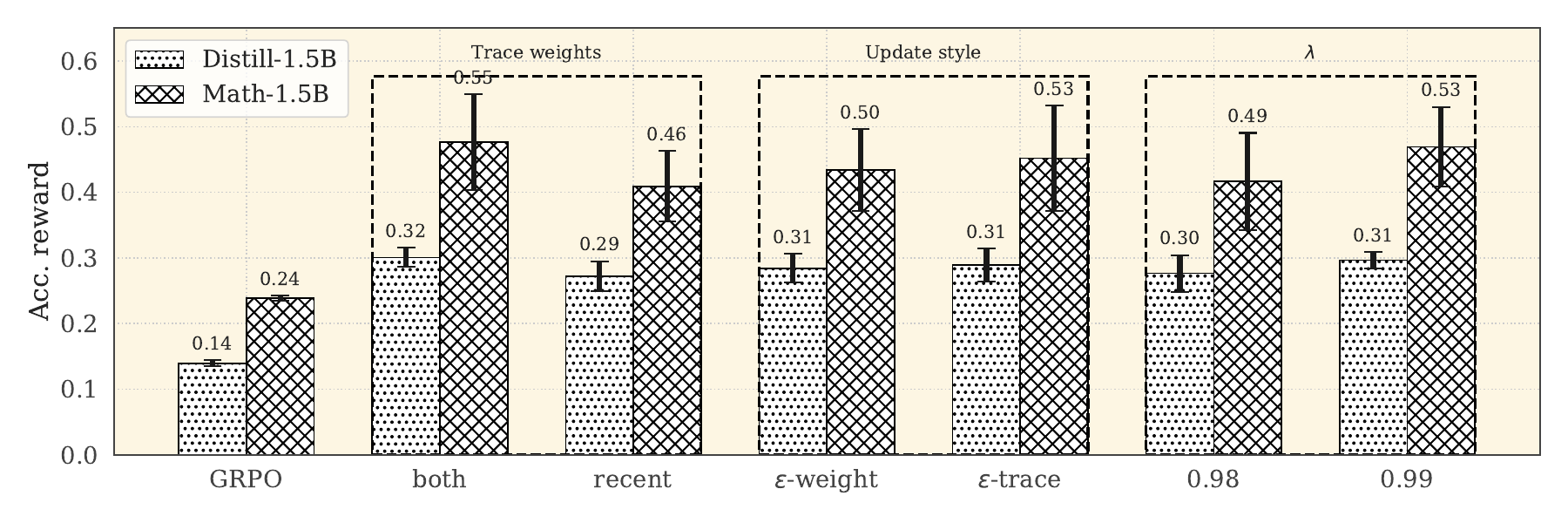}
    \caption{HP comparison for MathRL16K}
    \label{fig:hp-comparison-mathrl16k}
\end{subfigure}
\begin{subfigure}{\linewidth}
    \centering
    \includegraphics[width=\linewidth]{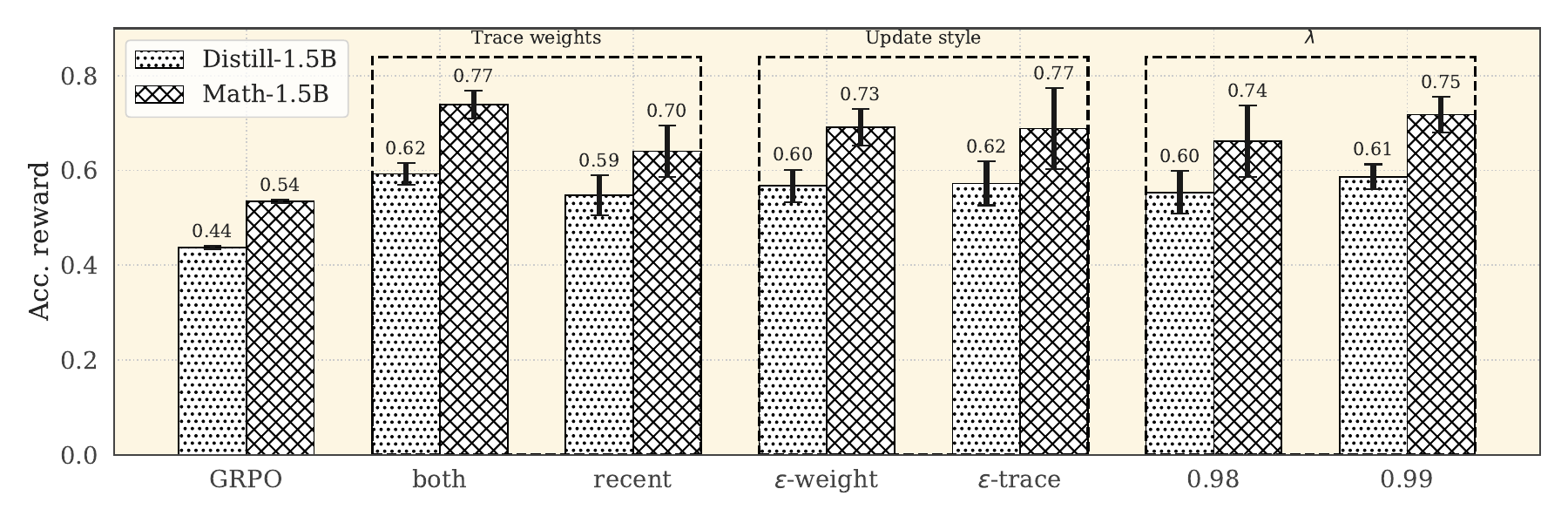}
    \caption{HP comparison for Math12K}
    \label{fig:hp-comparison-math12k}
\end{subfigure}
\begin{subfigure}{\linewidth}
    \centering
    \includegraphics[width=\linewidth]{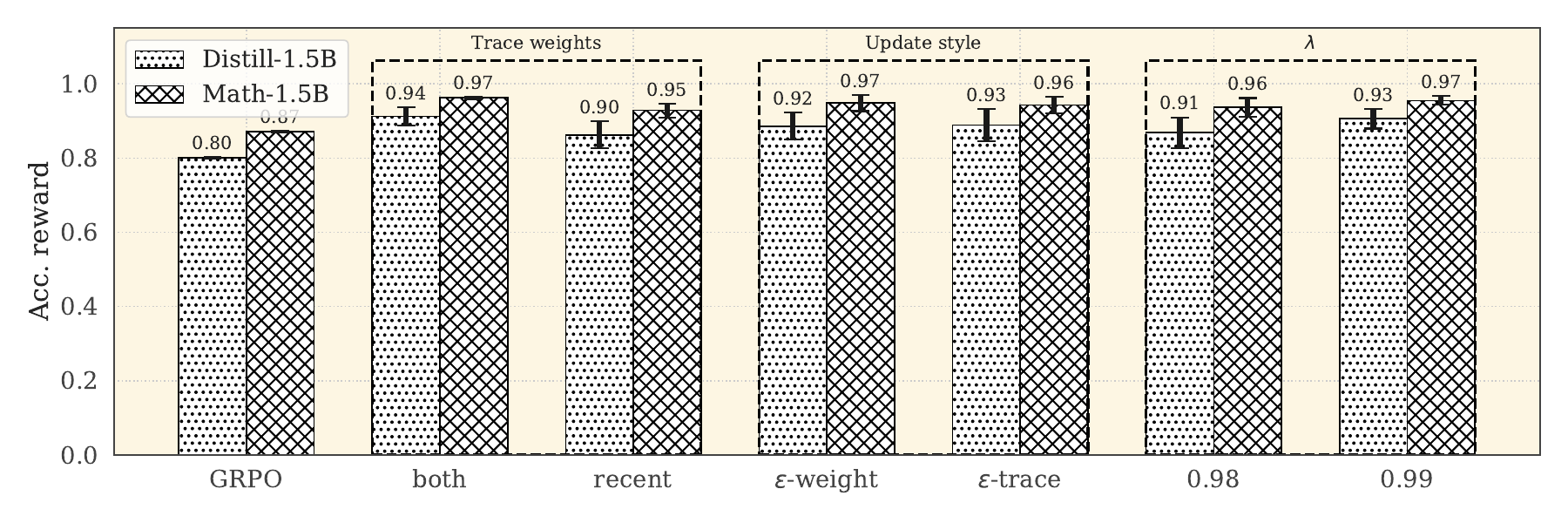}
    \caption{HP comparison for GSM8K}
    \label{fig:hp-comparison-gsm8k}
\end{subfigure}
\end{figure}


\newpage
\section{Complete Benchmark Results}
\label{appsec:benchmark-all-checkpoints}

\subsection{Evaluation scores}


\begin{table}[h]
\centering
\caption{Benchmark results for Qwen2.5-Math-1.5B-Instruct}
\begin{tabular}{l|r|rrrrrrrr}
\toprule
Trace Style & GRPO & \multicolumn{4}{r}{both} & \multicolumn{4}{r}{recent} \\
Update Style & GRPO & \multicolumn{2}{r}{$\epsilon$-trace} & \multicolumn{2}{r}{$\epsilon$-weight} & \multicolumn{2}{r}{$\epsilon$-trace} & \multicolumn{2}{r}{$\epsilon$-weight} \\
$\lambda$ & GRPO & 0.98 & 0.99 & 0.98 & 0.99 & 0.98 & 0.99 & 0.98 & 0.99 \\
\midrule
 \multicolumn{10}{c}{Trained on math12k} \\
\midrule
\rowcolor{blue!10}
AIME & 0.067 & 0.067 & 0.133 & 0.100 & 0.033 & 0.067 & 0.133 & 0.100 & 0.100 \\
\rowcolor{blue!10}
MATH & 0.704 & 0.744 & 0.756 & 0.746 & 0.682 & 0.720 & 0.688 & 0.718 & 0.696 \\
\rowcolor{blue!10}
AMC & 0.386 & 0.458 & 0.470 & 0.410 & 0.410 & 0.410 & 0.422 & 0.361 & 0.337 \\
\rowcolor{blue!10}
OLBen & 0.324 & 0.348 & 0.366 & 0.360 & 0.317 & 0.327 & 0.317 & 0.336 & 0.301 \\
\rowcolor{blue!10}
MIN & 0.191 & 0.206 & 0.217 & 0.199 & 0.221 & 0.213 & 0.195 & 0.232 & 0.213 \\
\rowcolor{blue!10}
\midrule
AVG & \textit{0.334} & 0.365 & {\bf 0.388} & 0.363 & 0.333 & 0.347 & 0.351 & 0.349 & 0.329 \\
\midrule
 \multicolumn{10}{c}{Trained on math-rl-16k} \\
\midrule
\rowcolor{orange!10}
AIME & 0.067 & 0.100 & 0.133 & 0.067 & 0.200 & 0.100 & 0.167 & 0.033 & 0.100 \\
\rowcolor{orange!10}
MATH & 0.652 & 0.684 & 0.682 & 0.710 & 0.696 & 0.692 & 0.704 & 0.626 & 0.712 \\
\rowcolor{orange!10}
AMC & 0.325 & 0.325 & 0.410 & 0.373 & 0.506 & 0.313 & 0.325 & 0.337 & 0.434 \\
\rowcolor{orange!10}
OLBen & 0.293 & 0.293 & 0.323 & 0.307 & 0.366 & 0.323 & 0.321 & 0.277 & 0.332 \\
\rowcolor{orange!10}
MIN & 0.202 & 0.228 & 0.191 & 0.206 & 0.246 & 0.217 & 0.199 & 0.173 & 0.206 \\
\rowcolor{orange!10}
\midrule
AVG & \textit{0.308} & 0.326 & 0.348 & 0.333 & \textbf{0.403} & 0.329 & 0.343 & 0.289 & 0.357 \\
\midrule
 \multicolumn{10}{c}{Trained on gsm8k} \\
\midrule
\rowcolor{gray!10}
AIME & 0.067 & 0.100 & 0.067 & 0.200 & 0.167 & 0.067 & 0.133 & 0.100 & 0.100 \\
\rowcolor{gray!10}
MATH & 0.706 & 0.666 & 0.716 & 0.716 & 0.706 & 0.684 & 0.704 & 0.710 & 0.718 \\
\rowcolor{gray!10}
AMC & 0.422 & 0.337 & 0.398 & 0.410 & 0.410 & 0.422 & 0.398 & 0.410 & 0.422 \\
\rowcolor{gray!10}
OLBen & 0.345 & 0.319 & 0.344 & 0.369 & 0.356 & 0.348 & 0.350 & 0.338 & 0.335 \\
\rowcolor{gray!10}
MIN & 0.257 & 0.188 & 0.213 & 0.213 & 0.202 & 0.254 & 0.217 & 0.224 & 0.228 \\
\rowcolor{gray!10}
\midrule
AVG & \textit{0.359} & 0.322 & 0.347 & \textbf{0.382} & 0.368 & 0.355 & 0.360 & 0.356 & 0.360 \\
\midrule
 \multicolumn{10}{c}{Trained on orzmath57k} \\
\midrule
\rowcolor{red!10}
AIME & 0.067 & 0.067 & 0.133 & 0.067 & 0.133 & 0.100 & 0.067 & 0.033 & 0.033 \\
\rowcolor{red!10}
MATH & 0.682 & 0.630 & 0.686 & 0.716 & 0.736 & 0.692 & 0.708 & 0.606 & 0.684 \\
\rowcolor{red!10}
AMC & 0.386 & 0.386 & 0.350 & 0.398 & 0.446 & 0.386 & 0.373 & 0.386 & 0.386 \\
\rowcolor{red!10}
OLBen & 0.311 & 0.244 & 0.292 & 0.333 & 0.333 & 0.335 & 0.341 & 0.262 & 0.324 \\
\rowcolor{red!10}
MIN & 0.226 & 0.254 & 0.202 & 0.202 & 0.210 & 0.228 & 0.195 & 0.151 & 0.217 \\
\rowcolor{red!10}
\midrule
AVG & \textit{0.334 }& 0.316 & 0.332 & 0.343 & \textbf{0.378} & 0.348 & 0.337 & 0.276 & 0.323 \\
\bottomrule
\end{tabular}
\end{table}

\newpage

\begin{table}[h]
\centering
\caption{Benchmark results for r1-Qwen-distill-1.5B}
\begin{tabular}{l|r|rrrrrrrr}
\toprule
Trace Style & GRPO & \multicolumn{4}{r}{both} & \multicolumn{4}{r}{recent} \\
Update Style & GRPO & \multicolumn{2}{r}{$\epsilon$-trace} & \multicolumn{2}{r}{$\epsilon$-weight} & \multicolumn{2}{r}{$\epsilon$-trace} & \multicolumn{2}{r}{$\epsilon$-weight} \\
$\lambda$ & GRPO & 0.98 & 0.99 & 0.98 & 0.99 & 0.98 & 0.99 & 0.98 & 0.99 \\
\midrule
 \multicolumn{10}{c}{Trained on math12k} \\
\midrule
\rowcolor{blue!10}
AIME & 0.133 & 0.033 & 0.133 & 0.100 & 0.167 & 0.133 & 0.167 & 0.133 & 0.133 \\
\rowcolor{blue!10}
MATH & 0.640 & 0.656 & 0.690 & 0.712 & 0.752 & 0.708 & 0.708 & 0.632 & 0.698 \\
\rowcolor{blue!10}
AMC & 0.398 & 0.410 & 0.458 & 0.446 & 0.482 & 0.410 & 0.494 & 0.325 & 0.422 \\
\rowcolor{blue!10}
OLBen & 0.264 & 0.277 & 0.279 & 0.311 & 0.339 & 0.289 & 0.283 & 0.247 & 0.268 \\
\rowcolor{blue!10}
MIN & 0.199 & 0.202 & 0.206 & 0.158 & 0.243 & 0.210 & 0.180 & 0.143 & 0.180 \\
\rowcolor{blue!10}
\midrule
AVG & \textit{0.327} & 0.316 & 0.353 & 0.345 & \textbf{0.397} & 0.350 & 0.366 & 0.296 & 0.340 \\
\midrule
 \multicolumn{10}{c}{Trained on math-rl-16k} \\
\midrule
\rowcolor{orange!10}
AIME & 0.000 & 0.133 & 0.100 & 0.067 & 0.067 & 0.133 & 0.167 & 0.000 & 0.133 \\
\rowcolor{orange!10}
MATH & 0.658 & 0.564 & 0.630 & 0.714 & 0.492 & 0.694 & 0.726 & 0.396 & 0.686 \\
\rowcolor{orange!10}
AMC & 0.373 & 0.434 & 0.434 & 0.446 & 0.301 & 0.422 & 0.446 & 0.253 & 0.422 \\
\rowcolor{orange!10}
OLBen & 0.255 & 0.271 & 0.270 & 0.299 & 0.197 & 0.305 & 0.323 & 0.141 & 0.256 \\
\rowcolor{orange!10}
MIN & 0.154 & 0.180 & 0.184 & 0.228 & 0.217 & 0.224 & 0.210 & 0.110 & 0.184 \\
\rowcolor{orange!10}
\midrule
AVG & \textit{0.288} & 0.316 & 0.323 & 0.351 & 0.255 & 0.356 & \textbf{0.374} & 0.180 & 0.336 \\
\midrule
 \multicolumn{10}{c}{Trained on gsm8k} \\
\midrule
\rowcolor{gray!10}
AIME & 0.200 & 0.133 & 0.067 & 0.133 & 0.133 & 0.133 & 0.100 & 0.100 & 0.167 \\
\rowcolor{gray!10}
MATH & 0.756 & 0.680 & 0.628 & 0.706 & 0.726 & 0.736 & 0.704 & 0.746 & 0.742 \\
\rowcolor{gray!10}
AMC & 0.494 & 0.422 & 0.373 & 0.446 & 0.482 & 0.446 & 0.410 & 0.518 & 0.506 \\
\rowcolor{gray!10}
OLBen & 0.330 & 0.271 & 0.255 & 0.305 & 0.324 & 0.311 & 0.283 & 0.338 & 0.329 \\
\rowcolor{gray!10}
MIN & 0.243 & 0.151 & 0.162 & 0.199 & 0.239 & 0.199 & 0.213 & 0.232 & 0.261 \\
\rowcolor{gray!10}
\midrule
AVG & \textit{0.405} & 0.331 & 0.297 & 0.358 & 0.381 & 0.365 & 0.342 & 0.387 & \textbf{0.401} \\
\midrule
 \multicolumn{10}{c}{Trained on orzmath57k} \\
\midrule
\rowcolor{red!10}
AIME & 0.133 & 0.067 & 0.133 & 0.133 & 0.133 & 0.167 & 0.133 & 0.167 & 0.133 \\
\rowcolor{red!10}
MATH & 0.644 & 0.684 & 0.646 & 0.726 & 0.740 & 0.728 & 0.728 & 0.686 & 0.748 \\
\rowcolor{red!10}
AMC & 0.400 & 0.422 & 0.349 & 0.518 & 0.470 & 0.422 & 0.494 & 0.386 & 0.470 \\
\rowcolor{red!10}
OLBen & 0.262 & 0.286 & 0.292 & 0.350 & 0.354 & 0.305 & 0.329 & 0.284 & 0.311 \\
\rowcolor{red!10}
MIN & 0.169 & 0.186 & 0.195 & 0.220 & 0.254 & 0.217 & 0.239 & 0.176 & 0.191 \\
\rowcolor{red!10}
\midrule
AVG & \emph{0.321} & 0.329 & 0.323 & 0.390 & \textbf{0.390} & 0.366 & 0.385 & 0.340 & 0.371 \\
\bottomrule
\end{tabular}
\end{table}

\newpage

\begin{table}[h]
\centering
\caption{Benchmark results for R1-Distill-Llama-3B}
\begin{tabular}{l|r|rr|r|rr}
\toprule
Trace Style & GRPO & both & recent & GRPO & both & recent \\
Update Style & GRPO & $\epsilon$-trace & $\epsilon$-trace & GRPO & $\epsilon$-trace & $\epsilon$-trace \\
$\lambda$ & GRPO & 0.99 & 0.99 & GRPO & 0.99 & 0.99 \\
\midrule
Trained on & \multicolumn{3}{r|}{math12k} & \multicolumn{3}{r}{gsm8k} \\
\midrule
AIME & \cellcolor{blue!10}0.100 &\cellcolor{blue!10} 0.033 &\cellcolor{blue!10} 0.067 &\cellcolor{gray!10} 0.033 &\cellcolor{gray!10} 0.033 &\cellcolor{gray!10} 0.067 \\
MATH & \cellcolor{blue!10}0.458 & \cellcolor{blue!10}0.460 & \cellcolor{blue!10}0.432 &\cellcolor{gray!10} 0.272 &\cellcolor{gray!10} 0.524 &\cellcolor{gray!10} 0.518 \\
AMC & \cellcolor{blue!10}0.193 & \cellcolor{blue!10}0.157 &\cellcolor{blue!10} 0.205 &\cellcolor{gray!10} 0.145 &\cellcolor{gray!10} 0.277 &\cellcolor{gray!10} 0.205 \\
OLBen & \cellcolor{blue!10}0.129 & \cellcolor{blue!10}0.124 & \cellcolor{blue!10}0.121 &\cellcolor{gray!10} 0.090 &\cellcolor{gray!10} 0.179 &\cellcolor{gray!10} 0.166 \\
MIN & \cellcolor{blue!10}0.165 & \cellcolor{blue!10}0.165 & \cellcolor{blue!10}0.217 &\cellcolor{gray!10} 0.088 &\cellcolor{gray!10} 0.132 &\cellcolor{gray!10} 0.154 \\
\midrule
AVG & \cellcolor{blue!10}\textit{0.209} & \cellcolor{blue!10}0.188 &\cellcolor{blue!10} \textbf{0.208} & \cellcolor{gray!10}\textit{0.126} & \cellcolor{gray!10}\textbf{0.277} &\cellcolor{gray!10} 0.222 \\
\midrule
Trained on & \multicolumn{3}{r|}{orzmath57k} & \multicolumn{3}{r}{math-rl-16k} \\
\midrule
AIME &\cellcolor{red!10} 0.000 &\cellcolor{red!10} 0.000 &\cellcolor{red!10} 0.000 &\cellcolor{orange!10} 0.033 &\cellcolor{orange!10} 0.000 &\cellcolor{orange!10} 0.000 \\
MATH &\cellcolor{red!10} 0.300 &\cellcolor{red!10} 0.434 &\cellcolor{red!10} 0.410 &\cellcolor{orange!10} 0.356 &\cellcolor{orange!10} 0.214 &\cellcolor{orange!10} 0.440 \\
AMC &\cellcolor{red!10} 0.024 &\cellcolor{red!10} 0.157 &\cellcolor{red!10} 0.193 &\cellcolor{orange!10} 0.108 &\cellcolor{orange!10} 0.060 &\cellcolor{orange!10} 0.205 \\
OLBen &\cellcolor{red!10} 0.046 &\cellcolor{red!10} 0.135 &\cellcolor{red!10} 0.117 &\cellcolor{orange!10} 0.105 &\cellcolor{orange!10} 0.052 &\cellcolor{orange!10} 0.173 \\
MIN &\cellcolor{red!10} 0.096 &\cellcolor{red!10} 0.195 &\cellcolor{red!10} 0.151 &\cellcolor{orange!10} 0.107 &\cellcolor{orange!10} 0.121 &\cellcolor{orange!10} 0.165 \\
\midrule
AVG &\cellcolor{red!10} \textit{0.093} & \cellcolor{red!10}\textbf{0.184} &\cellcolor{red!10} 0.174 &\cellcolor{orange!10} \emph{0.142} &\cellcolor{orange!10} 0.089 &\cellcolor{orange!10} \textbf{0.197} \\
\bottomrule
\end{tabular}
\end{table}


\subsection{Deepseek-R1-Distill-Qwen-7B}
\FloatBarrier

\begin{table}[h]
\centering
\caption{Benchmark results for r1-Qwen-distill-7B}
\begin{tabular}{l|r|r|r|r|r|r|r|r}
\toprule
Trace Style & GRPO & recent & GRPO & recent & GRPO & recent & GRPO & recent \\
Update Style & GRPO & $\epsilon$-trace & GRPO & $\epsilon$-trace & GRPO & $\epsilon$-trace & GRPO & $\epsilon$-trace \\
$\lambda$ & GRPO & 0.99 & GRPO & 0.99 & GRPO & 0.99 & GRPO & 0.99 \\
\midrule
Trained on & \multicolumn{2}{c|}{math12k} & \multicolumn{2}{c|}{math-rl-16k} & \multicolumn{2}{c|}{gsm8k} & \multicolumn{2}{c}{orzmath57k} \\
\midrule
AIME &\cellcolor{blue!10} 0.200 &\cellcolor{blue!10} 0.200 &\cellcolor{orange!10} 0.167 &\cellcolor{orange!10} 0.267 &\cellcolor{gray!10} 0.167 &\cellcolor{gray!10} 0.233 &\cellcolor{red!10} 0.267 &\cellcolor{red!10} 0.167 \\
MATH &\cellcolor{blue!10} 0.778 &\cellcolor{blue!10} 0.824 &\cellcolor{orange!10} 0.774 &\cellcolor{orange!10} 0.792 &\cellcolor{gray!10} 0.770 &\cellcolor{gray!10} 0.766 &\cellcolor{red!10} 0.778 &\cellcolor{red!10} 0.820 \\
AMC &\cellcolor{blue!10} 0.494 &\cellcolor{blue!10} 0.566 &\cellcolor{orange!10} 0.518 &\cellcolor{orange!10} 0.578 &\cellcolor{gray!10} 0.458 &\cellcolor{gray!10} 0.458 &\cellcolor{red!10} 0.494 &\cellcolor{red!10} 0.470 \\
OLBen &\cellcolor{blue!10} 0.361 &\cellcolor{blue!10} 0.388 &\cellcolor{orange!10} 0.356 &\cellcolor{orange!10} 0.397 &\cellcolor{gray!10} 0.353 &\cellcolor{gray!10} 0.344 &\cellcolor{red!10} 0.375 &\cellcolor{red!10} 0.407 \\
MIN &\cellcolor{blue!10} 0.305 &\cellcolor{blue!10} 0.338 &\cellcolor{orange!10} 0.342 &\cellcolor{orange!10} 0.346 &\cellcolor{gray!10} 0.335 &\cellcolor{gray!10} 0.298 &\cellcolor{red!10} 0.298 &\cellcolor{red!10} 0.353 \\
\midrule
AVG &\cellcolor{blue!10} \textit{0.428} &\cellcolor{blue!10}\textbf{ 0.463} & \cellcolor{orange!10}\textit{0.431} & \cellcolor{orange!10}\textbf{0.476}  & \cellcolor{gray!10}\textit{0.416} & \cellcolor{gray!10}\textbf{0.420} & \cellcolor{red!10}\textit{0.442} & \cellcolor{red!10}\textbf{0.443} \\
\bottomrule
\end{tabular}
\end{table}

\newpage
\subsection{Generation lengths}

\begin{figure}[h]
    \centering
    \includegraphics[width=\linewidth]{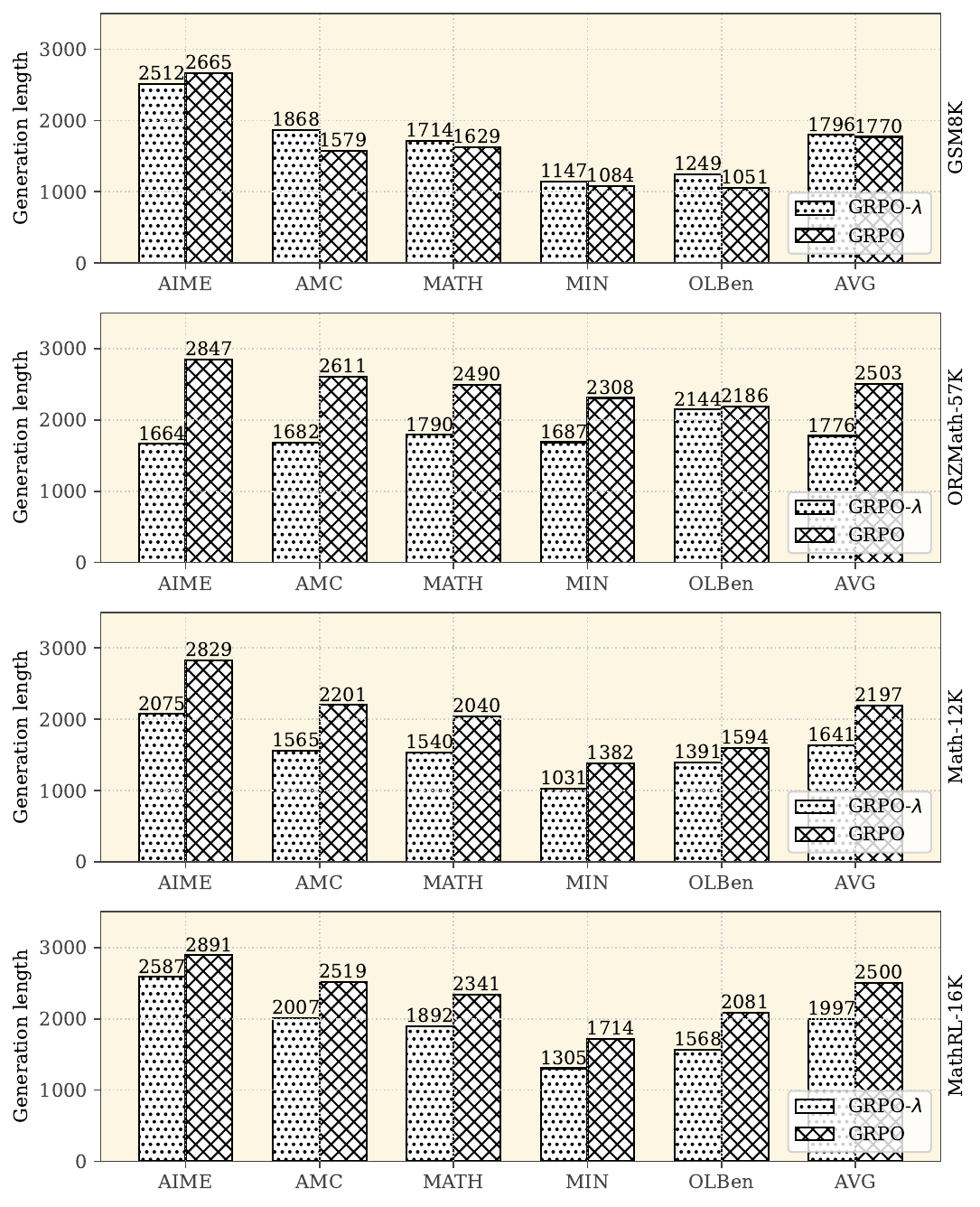}
    \caption{Comparison of the average generation length during evaluation between GRPO and \method when post-trained on Qwen2.5-Math-1.5B-Instruct. \method shows the average length across hyperparameter settings.}
    \label{fig:gen-len-qwen2.5-math-instruct-1.5B}
\end{figure}

\newpage
\begin{figure}[h]
    \centering
    \includegraphics[width=\linewidth]{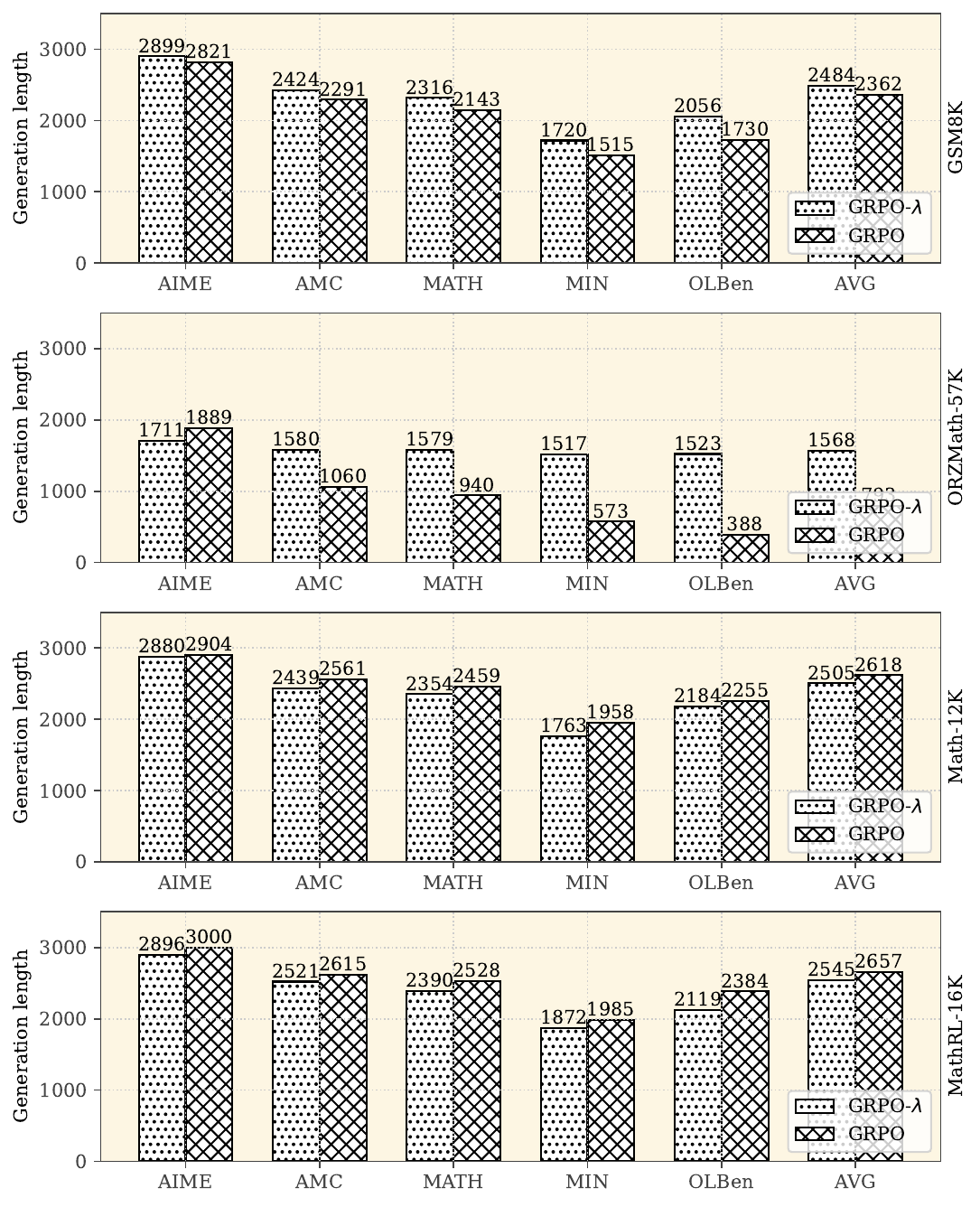}
    \caption{Comparison of the average generation length during evaluation between GRPO and \method when post-trained on r1-Qwen-distill-1.5B. \method shows the average length across hyperparameter settings.}
    \label{fig:gen-len-r1-Qwen-distill-1.5B}
\end{figure}

\newpage
\begin{figure}[h]
    \centering
    \includegraphics[width=\linewidth]{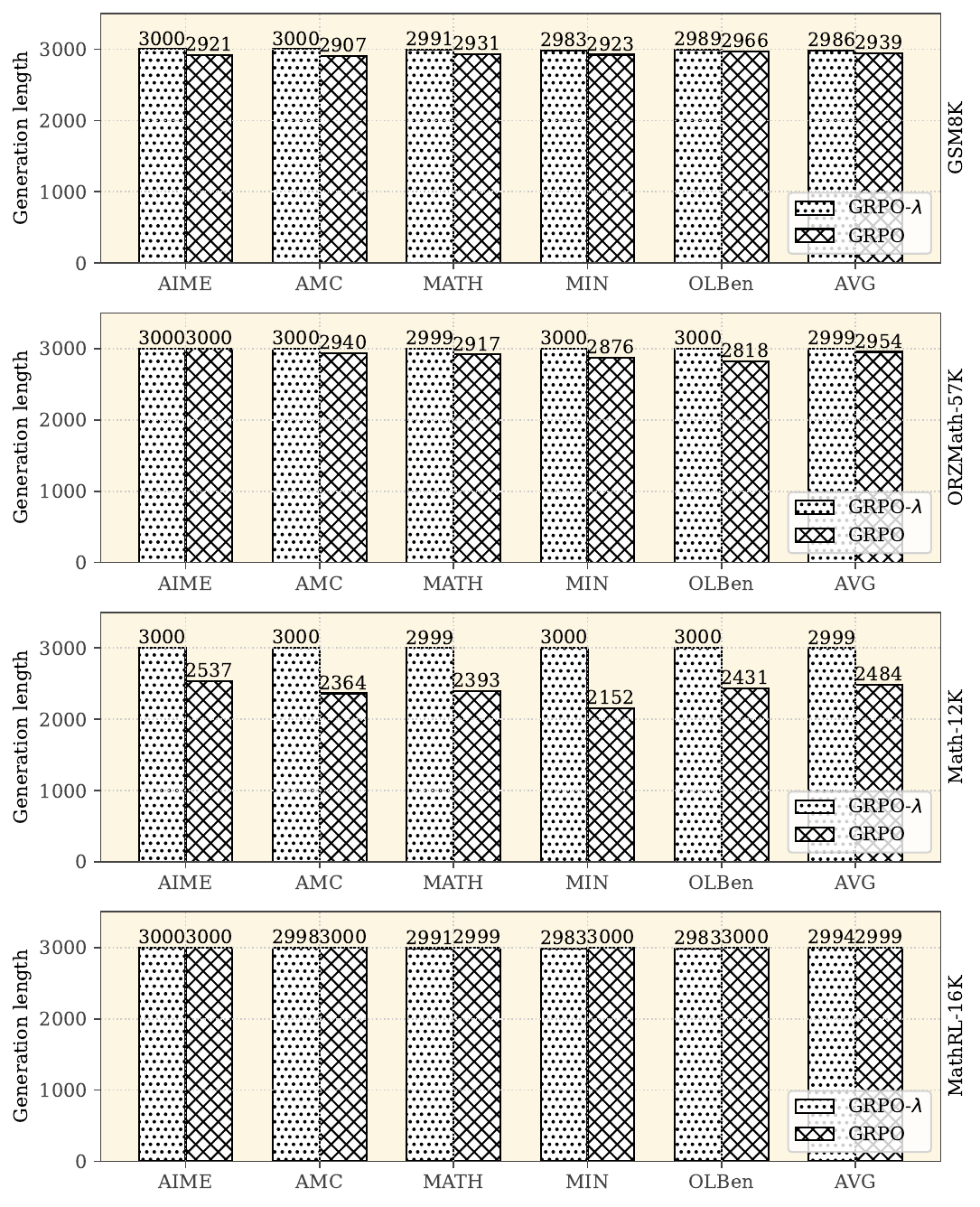}
    \caption{Comparison of the average generation length during evaluation between GRPO and \method when post-trained on r1-R1-Distill-Llama-3B. \method shows the average length across hyperparameter settings.}
    \label{fig:gen-len-R1-Distill-Llama-3B}
\end{figure}

\newpage
\begin{figure}[h]
    \centering
    \includegraphics[width=\linewidth]{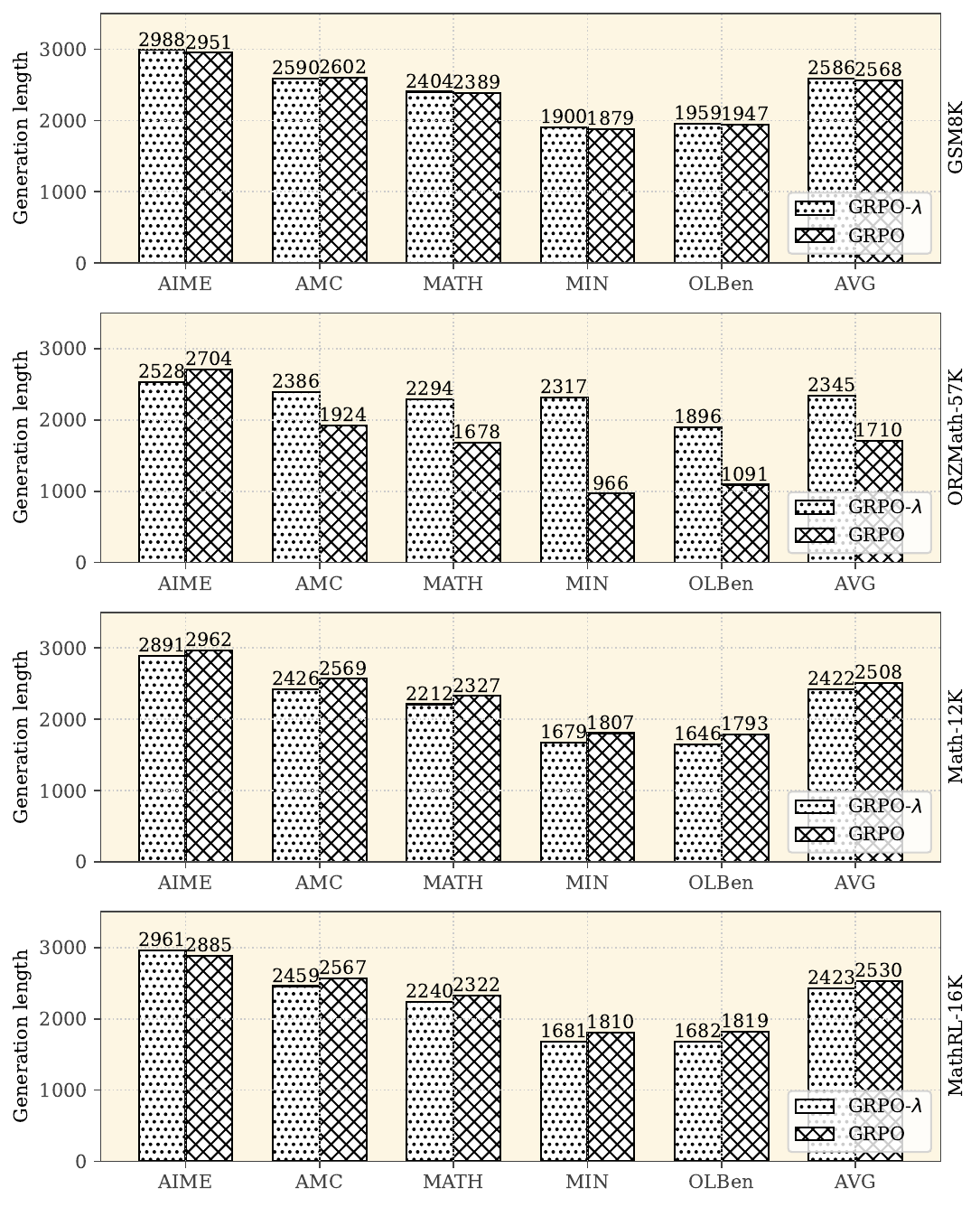}
    \caption{Comparison of the average generation length during evaluation between GRPO and \method when post-trained on r1-r1-Qwen-distill-7B. \method shows the average length across hyperparameter settings.}
    \label{fig:gen-len-r1-Qwen-distill-7B}
\end{figure}


\newpage
\section{Related Work}
\label{appsec:extended-related-work}

Recent advances in the scalable training of large architectures \cite{kaplan2020scaling,chowdhery2023palm}, the development of extensive pretraining corpora \cite{wei2022emergent}, and refined fine-tuning strategies such as instruction tuning \cite{zhang2023instruction} have substantially improved the capabilities of large language models (LLMs). These improvements have enabled LLMs to produce compelling responses across a wide range of tasks, including both closed- and open-ended question answering. In parallel, significant research has been devoted to minimizing undesirable behaviors through methods broadly categorized under \emph{preference learning} or \emph{alignment techniques}.

\subsection{Improving alignment through preferences} 
\paragraph{Pairwise preferences} LLMs trained via next-token prediction often fall short in instruction-following tasks, frequently generating toxic or untruthful content. The RLHF (Reinforcement Learning with Human Feedback) framework for LLMs introduced by InstructGPT \cite{ouyang2022training} addressed this by learning a reward model $r(x, y)$ that scores responses $y$ conditioned on prompts $x$. In the pairwise setting, given a preferred response $y_w$ and a less desirable response $y_l$, the model defines the preference likelihood using the Bradley-Terry model: $P(y_w > y_l | x) = \sigma(r(x, y_w) - r(x, y_l))$ \cite{bradley1952rank}.

To improve upon the original RLHF approach, DPO (Direct Preference Optimization) \cite{rafailov2023direct} proposed a reparameterization of the PPO-based objective that eliminates the need for an explicit reward or value model. Subsequent variants such as $\beta$-DPO \cite{wu2024beta}, sDPO (Stepwise DPO) \cite{kim2024sdpo}, and TDPO (Token-level DPO) \cite{zeng2024token} aim to enhance stability, mitigate overfitting, and preserve generation diversity.

\paragraph{Extensions with binary and listwise preferences}
Several efforts have explored alternative forms of preference data to reduce annotation burdens and improve learning. KTO (Kahneman-Tversky Optimization) \cite{ethayarajh2024kto} and DRO (Direct Reward Optimization) \cite{richemond2024offline} use binary feedback instead of pairwise comparisons, avoiding the need to collect pairwise preferences. KTO incorporates principles from prospect theory, introducing hyperparameters $\alpha$ and $\lambda$ to shape the value function's curvature and steepness. In contrast, DRO learns a parameterized value function jointly with the policy, showing superior empirical results relative to KTO. Alternativvely, LiPO (Listwise Preference Optimization) \cite{liu2024lipo} extends pairwise preferences by utilizing listwise preference data, arguing that richer signals from ranked outputs enable better alignment. However, the approach is sensitive to data quality and requires non-trivial filtering to remove noise from the training signal.

\paragraph{Advanced CoT with on-policy samples}
LLMs are increasingly applied to complex domains such as scientific QA, mathematical reasoning, and code generation. With sophisticated pretraining and high quality SFT, \cite{ding2023enhancing,xu2023wizardlm,xu2023baize} noted that variance in policy updates were no longer an issue. This variance reduction resulted in RLOO (REINFORCE Leave-one-out) \citep{ahmadian2024back}, that uses multiple on-policy samples to estimate the baseline for the REINFORCE policy gradient update. RLOO demonstrated significant performance improvement over DPO and PPO especially when more on-policy samples can be generated. GRPO \citep{shao2024deepseekmath}, a related method, avoids the leave-one-out step by estimating normalized advantages using a $z$-score across sampled completions. DeepseekMath, Deepseek-R1, and Deepseek-R2 all utilize GRPO for their significantly superior reasoning trajectories.

\paragraph{Improvements to GRPO}
GRPO originally aggregates token-level losses normalized by sequence length, which introduces a length bias favoring shorter responses. Dr. GRPO \citep{liu2025understanding} mitigates this by normalizing over the maximum completion length instead. Other extensions to GRPO include BNPO (batch normalized GRPO) \citep{vonwerra2022trl}, which introduces a minor yet effective modification: loss normalization across active tokens in a batch. When the \texttt{batch\_size=1} the loss behaves like the orginal GRPO loss. DAPO \cite{yu2025dapo} decouples the PPO clipping parameter into $\epsilon_{high}$ and $\epsilon_{low}$, and employs dynamic resampling to maintain meaningful gradients when batch rewards are either all 0 or all 1. GRPO has also been extended with improvements such as explicit penalties for undesirable responses, length-aware reward shaping, and difficulty-weighted advantage scaling \citep{zhang2025grpo}. Complementary to these, \method proposes trace-weighted advantage estimation for better credit assignment, accelerating learning and improving robustness on challenging benchmarks.

\subsection{Improving LLM reasoning} 
\paragraph{Training for improved credit assignment} 
RLHF \citep{christiano2017deep,ouyang2022training}, based on PPO \cite{schulman2017proximal}, relies on explicit value models for reward and baseline estimation. DPO \cite{rafailov2023direct}, in contrast, treats the response as a single bandit action, eliminating the need for value modeling. GRPO \citep{shao2024deepseekmath} and RLOO \citep{ahmadian2024back} similarly avoid explicit critics, instead estimating baselines from multiple samples. While value models can accelerate learning, they can suffer from drift, causing misalignment between the critic and policy. VinePPO \citep{kazemnejad2024vineppo} addresses this via Monte Carlo rollouts from intermediate states, yielding more accurate value estimates. Beyond architectural modifications, recent work has explored leveraging both positive and negative samples to enhance learning. \cite{setlur2024rl} show that incorporating negative trajectories helps unlearn spurious correlations and establish a connection to advantage-weighted reinforcement learning. In a similar vein, \cite{hwang2024self} propose Self-Explore, wherein the model identifies its first incorrect reasoning step and generates multiple continuations to construct step-level preference data. This enables fine-grained updates via DPO and leads to improved reasoning capabilities.
 
\paragraph{Inference-time reasoning enhancements}
In addition to their role in training, value estimates have proven effective during inference, particularly in planning-based approaches such as AlphaGo \cite{silver2016mastering} and AlphaZero \cite{silver2017mastering}, which use tree search guided by value networks. Analogous strategies have been adapted for LLMs to enhance inference-time reasoning. Several approaches operate without explicit value critics but instead rely on structured prompting or model-internal heuristics. Tree-of-Thoughts prompting \citep{yao2023tree} enables models to generate multiple intermediate reasoning paths and iteratively evaluate them to choose the most promising trajectory. Alternatively, \cite{weng2022large} checks the correctness of their own intermediate outputs through self-verification to improve the quality of CoT generation, while \cite{shinn2023reflexion} reflects over the partial generation to improve and align better with the preferences and prompt. Planning-based techniques take this further by explicitly decomposing a complex input query into a sequence of subproblems \citep{wang2023plan}. Even lightweight inference-time strategies like self-consistency decoding \citep{wang2022self} have demonstrated performance gains, outperforming deterministic decoding strategies such as greedy or beam search.

\end{document}